\theoremstyle{definition}
\newtheorem{definition}{Definition}
\theoremstyle{remark}
\newtheorem{remark}{Remark}
\newmdtheoremenv{theorem}{Theorem} 
\newmdtheoremenv{prop}{Proposition}
\newmdtheoremenv{lemma}{Lemma}
\title{Sequence Complementor: Complementing Transformers For Time Series Forecasting with Learnable Sequences}
\author{
    Xiwen Chen\textsuperscript{\rm 1}\equalcontrib, Peijie Qiu\textsuperscript{\rm 2}\equalcontrib, Wenhui Zhu\textsuperscript{\rm 3}\equalcontrib, Huayu Li\textsuperscript{\rm 4}, Hao Wang\textsuperscript{\rm 1}, \\ Aristeidis Sotiras\textsuperscript{\rm 2}, Yalin Wang\textsuperscript{\rm 3}, Abolfazl Razi\textsuperscript{\rm 1}
}
\begin{document}

\maketitle

\begin{abstract}
Since its introduction, the transformer has shifted the development trajectory away from traditional models (e.g., RNN, MLP) in time series forecasting, which is attributed to its ability to capture global dependencies within temporal tokens. Follow-up studies have largely involved altering the tokenization and self-attention modules to better adapt Transformers for addressing special challenges like non-stationarity, channel-wise dependency, and variable correlation in time series. However, we found that the expressive capability of sequence representation is a key factor influencing Transformer performance in time forecasting after investigating several representative methods, where there is an almost linear relationship between sequence representation entropy and mean square error, with more diverse representations performing better. In this paper, we propose a novel attention mechanism with Sequence Complementors and prove feasible from an information theory perspective, where these learnable sequences are able to provide complementary information beyond current input to feed attention. We further enhance the Sequence Complementors via a diversification loss that is theoretically covered. The empirical evaluation of both long-term and short-term forecasting has confirmed its superiority over the recent state-of-the-art methods.
\end{abstract}

\section{Introduction}

Time series forecasting (TSF) plays a crucial role in various domains, enabling the extraction of meaningful patterns and the prediction of future events based on historical data. It incubates a wide spectrum of applications, ranging from financial risk assessment, weather forecasting, and energy sustainability to healthcare \cite{wang2024deep}. Recently, the introduction of transformers~\cite{vaswani2017attention} has dramatically shifted the trajectory of model designs for TSF largely due to their ability to capture long-range dependencies. 




\begin{figure}
    \centering
    \includegraphics[width=0.98\columnwidth]{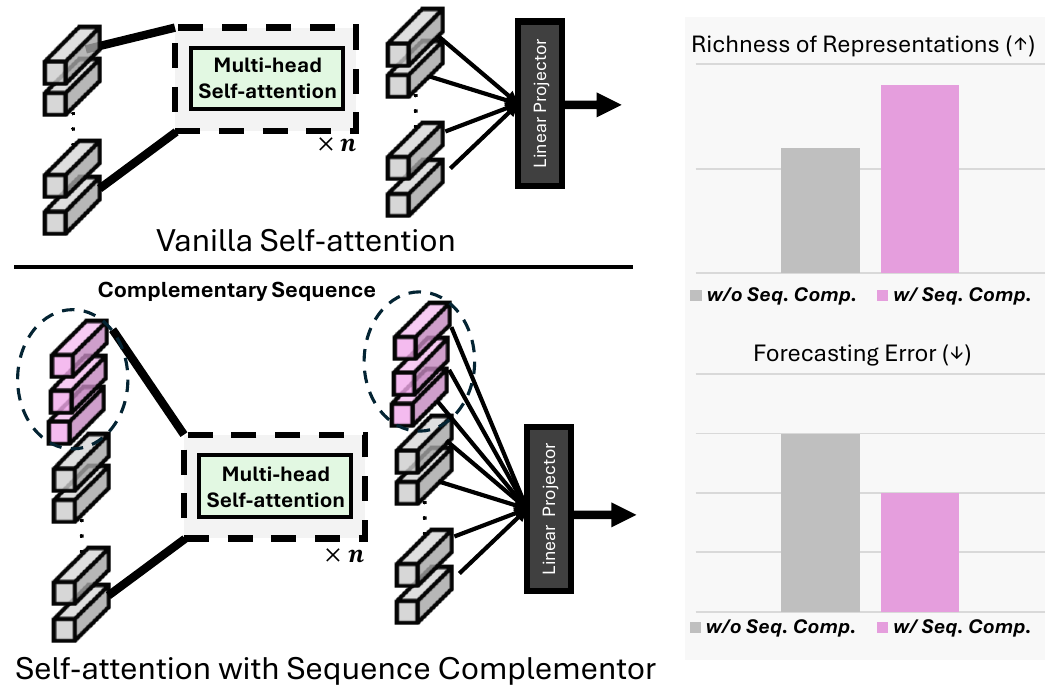}
    \caption{The vanilla self-attention mechanism v.s. the self-attention mechanism with proposed learnable \texttt{Sequence Complementors}, which serve as complementary sequences to the original input sequence (\textbf{Left}).
    The integration of Sequence Complementors results in richer learned representations and a better forecasting performance (\textbf{Right}). }
    \label{fig:intro}
\end{figure}

Following this trend, numerous efforts have been devoted to enhancing the architectural designs of transformer-based TSF models~\cite{zhou2021informer,liu2021pyraformer,zhou2022fedformer,wu2021autoformer,liu2022non,liu2024itransformer}. Although they have achieved satisfactory performance, their effectiveness in TSF is empirically challenged even by simple linear models~\cite{zeng2023transformers,lin2024sparsetsf}. However, an in-depth analysis of what limits the performance of transformers for TSF is insufficient in the literature. This motivates us to investigate the fundamental limits of transformers for TSF in both an empirical and theoretical way. 





For this purpose, we first conduct a series of empirical observations on four different types of transformers for TSF. We observe that the learned latent representations in four different types of transformers are not rich and generalizable enough, measured by the rank and entropy of the learned feature embeddings. Interestingly, we find that this has a direct relationship with the performance of the TSF task: more diverse and richer latent representations typically lead to higher performance. We believe that this phenomenon is likely due to the combination of limited training samples and domain shifts in most TSF datasets. 

Based on these empirical observations, the most direct solution to enhance the performance of transformers for TSF is to learn rich and generalizable representations. However, this typically requires a larger scale dataset containing diverse samples~\cite{bengio2013representation,lecun2015deep}, which is impractical for our tasks. This inspires us to consider artificially enriching the dataset by injecting a learnable sequence into the raw input sequence. We term this mechanism as~\texttt{Sequence Complementor}, as it complements transformers to discover additional information from the input sequence by interacting with the complementary sequence in a self-attention (see Fig.~\ref{fig:intro}).
Although this solution is simple, it has nice information-theoretic guarantees in lowering the bound of mean-squared error and explicitly increasing the diversity of the learned latent representations. To further enforce the learnable complementary sequence to be diverse, we propose a diversification loss by leveraging the volume maximization of sub-matrices. Our extensive empirical evaluations across a variety of TSF tasks demonstrate the effectiveness of the proposed method.



In summary, our contributions are threefold: \textbf{(i)} We demonstrate a strong correlation between forecasting performance and the learned representations of different transformer variants; \textbf{(ii)} To this end, we propose a theoretically sound \texttt{Sequence Complementor} to enrich the learned representations of Transformer; \textbf{(iii)} We further propose a novel loss to diversify the learnable Sequence Complementor from a theoretical perspective. 

\begin{figure*}[!t]
    \centering
    \includegraphics[width=1.0\textwidth]{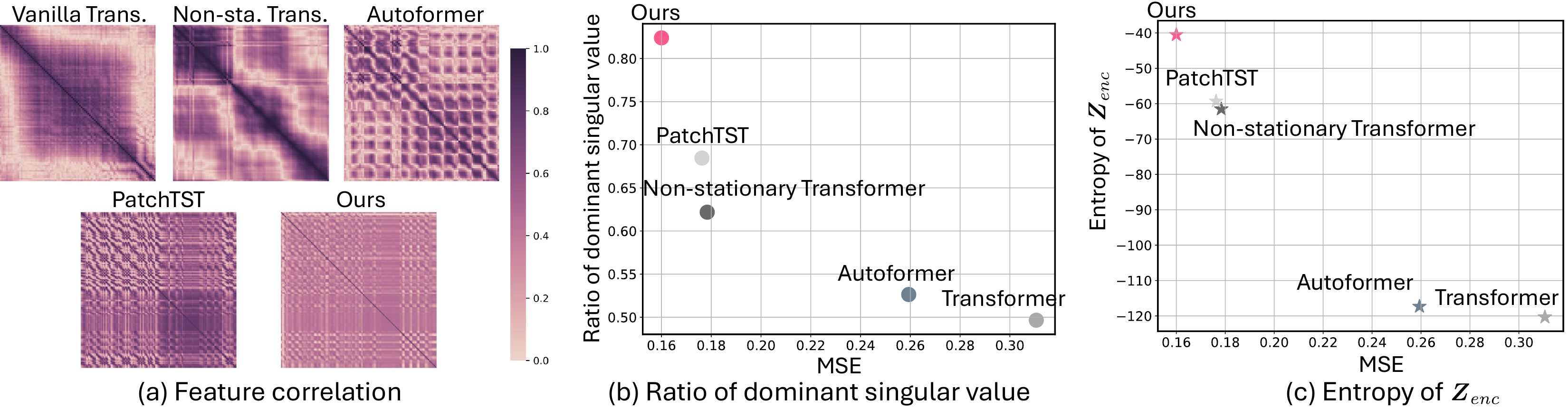}
    \caption{The analysis of transformers for time series forecasting: (a) the correlation of the learned latent representations from the encoder ($\boldsymbol{Z}_{enc}$), (b) the ratio of dominant singular value against MSE, and (c) the feature entropy against MSE.  }
    \label{fig:obs1}
\end{figure*}

\section{Preliminaries}
Without loss of generality, the multivariate time series forecasting (MTSF) given historical observations $\boldsymbol{X}=\left\{\boldsymbol{x}_1, \ldots, \boldsymbol{x}_{T_{in}}\right\} \in \mathbb{R}^{T_{in} \times N}$ with $T$ time steps and $N$ variates, the goal is to predict the future $T_{out}$ time steps $\boldsymbol{Y}=\left\{\boldsymbol{x}_{T_{in}+1}, \ldots, \boldsymbol{x}_{T_{in}+T_{out}}\right\} \in$ $\mathbb{R}^{T_{out} \times N}$.
We consider transformer-based models with a general structure for MTSF~\cite{wu2021autoformer,liu2022non,nie2023a}: 
\begin{align}\label{eq:trans}
    \boldsymbol{X}\xrightarrow{f_\texttt{p}} \boldsymbol{Z}_0 \xrightarrow{f_\texttt{embed}} \boldsymbol{Z}_{embed}  \xrightarrow{f_\texttt{Enc}} \boldsymbol{Z}_{enc}  \xrightarrow{f_\texttt{Dec}} \hat{\boldsymbol{Y}},
\end{align}
where we denote the feature of each sequence $\boldsymbol{X}$ after patchifying as $\boldsymbol{Z}_0 \in \mathbb{R}^{L \times D_0}$, with $L$ and $D_0$ being the number of patchified sequences and the initial dimension of patches, respectively. $\boldsymbol{Z}_{embed}\in \mathbb{R}^{L \times D}$ denotes the embedded patches. Likewise, $\boldsymbol{Z}_{enc}\in \mathbb{R}^{L \times D} $ denotes the latent feature learned by the encoder, and $\hat{\boldsymbol{Y}}$ denotes the final prediction. The encoder often involves several transformer blocks. It is worth mentioning that early works (e.g., \cite{wu2021autoformer})  prefer to utilize transformer blocks as the encoder, while recent works \cite{nie2023a} abandon transformer-based decoder and prefer to directly apply a linear projector. In this sense, we also consider the linear projector as \texttt{encoder} without the loss of generalizability.

\section{The Analysis of Transformers in Time Series Forecasting}
The transformer usually suffers information redundancy, which leads to a large number of neurons highly similar~\cite{redundancy,redundancy1,redundancy2}. Some works on network pruning studies typically address the issue of avoiding computational resource waste by considering the removal of neurons through channel shrinkage~\cite{channel,channel2,channel3}. Here, we conjecture that there is a relationship between this information redundancy and the performance of transformer-based methods in MTSF. To validate our hypothesis, we first select several representative models, including the vanilla transformer, Autoformer \cite{wu2021autoformer}, Non-stationary Transformer \cite{liu2022non}, and PatchTST \cite{nie2023a}. Under fair experimental conditions (Appendix C), we visualized feature similarity by selecting the best-performing variant of each model. Specifically, for each model, we calculated the patch-wise similarity after the self-attention block, generating a feature similarity map for each model (Fig. \ref{fig:obs1}(\textbf{Left})). An intriguing observation is that PatchTST, which exhibits relatively lower feature correlation, consistently achieves better performance (lower MSE scores). This suggests that models with less information redundancy, or lower feature similarity and higher diversity, tend to perform better. However, drawing conclusions based solely on visual inspection of feature correlation is insufficient. Here, 
we introduce two metrics that are used to further analyze the relation between feature diversity and performance ($\boldsymbol{Z}_{enc}$): (i) the ratio of the dominant singular values in $\boldsymbol{Z}_{enc}$ and (ii) the entropy of features.
Given the singular values of $\boldsymbol{Z}_{enc}$ (i.e., $\{\sigma_1, \sigma_2, \dots,\cdots \}$), the ratio of the dominant singular value is calculated as the ration between the dominant singular values ($\sigma_i>0.1$) and the total number of singular values. In other words, this approximately measures the rank of the matrix $\boldsymbol{Z}_{enc}$. 
\begin{definition}
    Let \( \boldsymbol{Z} \in \mathbb{R}^d \) be a random vector that follows a multivariate Gaussian distribution with covariance matrix \( \boldsymbol{\Sigma} \in \mathbb{R}^{d \times d} \). The Gaussian entropy of \( \boldsymbol{Z} \) is defined as:
\begin{align}\label{eq:def1}
    H_{\mathcal{G}}(\boldsymbol{Z}) = \frac{1}{2} \log \left( (2\pi e)^d \det(\boldsymbol{\Sigma}) \right).
\end{align}
\end{definition}
\begin{remark}
    The entropy can be estimated by a set of samples  \( \boldsymbol{Z}=[ \boldsymbol{z}_1, \boldsymbol{z}_2, \ldots, \boldsymbol{z}_n ]^\top\in  \mathbb{R}^{n\times d} \), where each \( \boldsymbol{z}_i \) is an \( n \)-dimensional vector, the Gaussian entropy is computed as:
\begin{align}
    H_{\mathcal{G}}(\boldsymbol{Z}) = \frac{1}{2} \log \left( (2\pi e)^d \det\left( \frac{1}{n} \boldsymbol{Z}^\top \boldsymbol{Z} +\epsilon\boldsymbol{I} \right) \right),
\end{align}
where the covariance $\boldsymbol{\Sigma}$ in Eq. \ref{eq:def1} is approximated by $\frac{1}{n} \sum_{i=1}^{n} \boldsymbol{z}_i \boldsymbol{z}_i^T = \frac{1}{n}\boldsymbol{Z}^\top \boldsymbol{Z}$. $\epsilon$ and  $\boldsymbol{I}$ denote a very small number and identity matrix, respectively, which are used to avoid non-trivial solutions. This approximation is adopted by \cite{ma2007segmentation,yu2020learning,chen2024learning,rd_dpp}. 
\end{remark}
In this analysis, we employ the Gaussian entropy as a surrogate estimation of the richness or diversity of $\boldsymbol{Z}_{enc}$, and a higher entropy corresponds to a richer or more diverse feature representation. We consider patches to come from one sequence as a set of samples. The metrics are measured when the network is well-trained.
Our analysis result reveals insights that a negative linear relationship between feature entropy and the ratio of the dominant singular value with MSE (refer to Fig. \ref{fig:obs1}(\textbf{Center},\textbf{Right})). Specifically, higher feature entropy and a greater ratio of the dominant singular value generally correspond to better performance, as indicated by lower MSE.  In summary, the performance of transformers in TSF is largely constrained by the lack of richness in the learned latent representations, as they struggle to generalize effectively to the domain shifts that are commonly encountered in TSF.
In contrast, diverse feature representations typically contribute to better generalization~\cite{bengio2013representation,lecun2015deep}, which often leads to better performance in TSF tasks.

\section{Sequence Complementor}
Our empirical analysis suggests that the key to enhancing the transformer's performance in time series forecasting revolves around learning rich feature representations. In this section, we answer this question with the proposed sequence complementor, which involves adding learnable complementary sequences to a raw time-sequence input. Although the proposed method is simple, it is surprisingly effective and has nice theoretical guarantees. 

\subsection{Learnable Complementary Sequence}
The most effective way to learn rich feature representations is to train the model on more samples~\cite{bengio2013representation,lecun2015deep,sun2017revisiting}. However, this is not always feasible for TSF in practice, where we typically have a variety of TSF tasks, each of which contains only limited samples. An intuitive idea for this challenge is to improve the diversity of the raw input time sequence. To this end, we propose a learnable complementary sequence that adds an additional learnable sequence to the raw input sequence to enhance its diversity. 

The implementation of the proposed learnable complementary sequence is straightforward. Following the definition of the transformer in Eq.~\ref{eq:trans}, we concatenate $K$ number of learnable sequences $\boldsymbol{S}$ to the feature embeddings $\boldsymbol{Z}_0$:
\begin{align}
    \Tilde{\boldsymbol{Z}}_0\in\mathbb{R}^{ (L_c+K) \times P} \leftarrow \texttt{concat}(\boldsymbol{Z}_0,\boldsymbol{S}),
\end{align}
where $\texttt{concat}(\cdot, \cdot)$ denotes concatenation operation, and $L_c$ and $P$ denotes the number of patches for each channel and patchified sequence length, respectively.
Since $\boldsymbol{S}$ is learnable,
adding $\boldsymbol{S}$ directly to $\boldsymbol{Z}_0$ is more straightforward than adding it to the raw input sequence $\boldsymbol{X}$. Although this process can be model-agnostic and seamlessly integrated into most current TSF transformers, the detailed implementation can vary from model to model (e.g., different patchifying strategies will result in different numbers of tokens). As a proof of concept, we consider the patchifying strategy outlined in~\cite{nie2023a} as an example in this paper. This will result in $L_c= (T_{in}-P)/S+2$ number of patches, where $S$ denotes the stride used for patchifying. 

We then add one more embedding layer to project $\Tilde{\boldsymbol{Z}}_0$ to an embedding feature $\boldsymbol{Z}_{embed}$, which is commonly used in many transformer design \cite{vaswani2017attention}:
\begin{align}
    \boldsymbol{Z}_{embed}\in\mathbb{R}^{ (L_c+K) \times D } \leftarrow f_\texttt{linear}(\Tilde{\boldsymbol{Z}}_0),
\end{align}
where $D$ denotes the embedding dimensions. It is noteworthy that the positional encoding $f_{\texttt{pos}}$ is only applied to patches from  $\boldsymbol{X}$ to preserve the relations of original positions:
\begin{align}
\boldsymbol{Z}_{embed}[:L_c]\leftarrow f_{\texttt{pos}}(\boldsymbol{Z}_{embed}[:L_c])+\boldsymbol{Z}_{embed}[:L_c],\\ \nonumber
  \boldsymbol{Z}_{embed} \leftarrow \texttt{concat}( \boldsymbol{Z}_{embed}[:L_c],\boldsymbol{Z}_{embed}[L_c+1:]).
\end{align}
Then, we can feed $\boldsymbol{Z}_{embed}$ to the transformer blocks of the encoder $f_\texttt{enc}$ to obtain latent representations $\boldsymbol{Z}_{enc}$:
\begin{align}
\boldsymbol{Z}_{enc}\in\mathbb{R}^{ (L_c+K) \times D }\leftarrow f_{\texttt{enc}}(\boldsymbol{Z}_{embed}).
\end{align}

We want to reiterate that we do not intend to modify the original network architecture and also to reduce the computation overhead, we only use the patches from the original sequence for the subsequent operations. Since there is enough interaction among complementary patches and original patches through a self-attention mechanism, there would not be substantial information loss if we abandon complementary patches here. For simplicity, we here only show the self-attention mechanism on the last transformer block from the encoder to reveal this fact:
\begin{align}
&\boldsymbol{Z}_{enc}[:L_c]= \text{softmax}\left(\frac{\boldsymbol{Q} \boldsymbol{K}^T}{\sqrt{d_k}}\right)\boldsymbol{V},\\ \nonumber
&\boldsymbol{Q} =  \boldsymbol{Z}_{embed}[:L_c]\boldsymbol{W}_Q,\\ \nonumber
&\boldsymbol{K} = [\boldsymbol{Z}_{embed}[:L_c], \boldsymbol{Z}_{embed}[L_c+1:]]\boldsymbol{W}_K, \\ \nonumber
&\boldsymbol{V} = [\boldsymbol{Z}_{embed}[:L_c], \boldsymbol{Z}_{embed}[L_c+1:]] \boldsymbol{W}_V.
\end{align}
where $\boldsymbol{W}_Q,\boldsymbol{W}_k,\boldsymbol{W}_V$ are learnable parameters. Then the subsequent operations is presented as,
\begin{align}
\hat{\boldsymbol{Y}}= \texttt{reshape}(f_\texttt{dec}(\boldsymbol{Z}_{enc}[:L_c]).
\end{align}

\subsubsection{Complexity analysis.} Adding complementary sequences will not introduce a huge computation overhead in self-attention. Specifically, this process increases the computational complexity from $\mathcal{O}((L_c^2)$  to $\mathcal{O}((L_c+K)(L_c+K))\approx\mathcal{O}(L_c^2+K^2)$. Additionally, the number of newly introduced parameters is negligible- only $K\times P$ for each channel, where $k$ and $P$ are often small (e.g. $K=3, P=16$). Hence, the overall newly introduced computational and memory complexity is acceptable.

\subsection{Theoretical Justification}

Below, we justify this joint from an information-theoretic perspective.
\subsubsection{Justification 1.} We first note that including the sequence complementors has the potential to result in a richer representation. This is because it can increase the entropy $H(\boldsymbol{Z}_{enc})$:
\begin{align}
    H(\boldsymbol{Z}_{enc})\leq  H(\boldsymbol{Z}_{enc},\boldsymbol{S}).
\end{align}
\begin{proof}  Please refer to Appendix {A}.\end{proof}
As suggested by previous observations, a higher entropy $H(\boldsymbol{Z}_{enc})$ indicates diverse feature representations, which shows a strong relationship to forecasting performance.

\subsubsection{Justification 2.} 
We then note that a richer representation $H(\boldsymbol{Z}_{enc})$ can potentially lead to a lower forecasting error.
Here, we first show the relations between MSE and the conditional uncertainty $H(\boldsymbol{Y} |\boldsymbol{Z}_{enc} )$,

\begin{lemma}\label{lemma:1}
    Under Gaussian assumption, the minimum mean-squared error (MMSE), is bounded by,  
   \begin{align}
       \text{MMSE} \overset{}{\geq} \frac{\exp2 H(\boldsymbol{Y} |\boldsymbol{Z}_{enc} )}{2\pi e}.
        \end{align}
        Here,  $H(\cdot|\cdot)$ denotes the conditional entropy. 
\end{lemma}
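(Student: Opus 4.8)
The plan is to establish this bound by combining the maximum-entropy property of the Gaussian distribution with Jensen's inequality, following the classical link between conditional differential entropy and estimation error. First I would recall that the minimum mean-squared error is attained by the conditional-mean estimator $\mathbb{E}[\boldsymbol{Y}\mid\boldsymbol{Z}_{enc}]$, so that $\text{MMSE}=\mathbb{E}_{\boldsymbol{Z}_{enc}}\big[\operatorname{Var}(\boldsymbol{Y}\mid\boldsymbol{Z}_{enc})\big]$. This reduces the problem to relating the conditional variance to the conditional entropy $H(\boldsymbol{Y}\mid\boldsymbol{Z}_{enc})$.

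Next I would invoke the fact that, among all distributions with a fixed variance, the Gaussian has the largest differential entropy. Applied pointwise to the conditional law of $\boldsymbol{Y}$ given $\boldsymbol{Z}_{enc}=\boldsymbol{z}$, this yields $H(\boldsymbol{Y}\mid\boldsymbol{Z}_{enc}=\boldsymbol{z})\le \frac{1}{2}\log\!\big(2\pi e\,\operatorname{Var}(\boldsymbol{Y}\mid\boldsymbol{Z}_{enc}=\boldsymbol{z})\big)$. Integrating over $\boldsymbol{z}$ against the marginal of $\boldsymbol{Z}_{enc}$ then gives $H(\boldsymbol{Y}\mid\boldsymbol{Z}_{enc})\le \frac{1}{2}\,\mathbb{E}_{\boldsymbol{Z}_{enc}}\big[\log\!\big(2\pi e\,\operatorname{Var}(\boldsymbol{Y}\mid\boldsymbol{Z}_{enc})\big)\big]$.

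Then I would apply Jensen's inequality, using the concavity of the logarithm, to move the expectation inside the log: $\mathbb{E}_{\boldsymbol{Z}_{enc}}\big[\log(\cdot)\big]\le \log\big(\mathbb{E}_{\boldsymbol{Z}_{enc}}[\cdot]\big)$. Combined with the previous step and the identification of $\mathbb{E}_{\boldsymbol{Z}_{enc}}[\operatorname{Var}(\boldsymbol{Y}\mid\boldsymbol{Z}_{enc})]$ with the MMSE, this produces $H(\boldsymbol{Y}\mid\boldsymbol{Z}_{enc})\le \frac{1}{2}\log\!\big(2\pi e\cdot\text{MMSE}\big)$. Exponentiating both sides and rearranging yields exactly $\text{MMSE}\ge \exp(2H(\boldsymbol{Y}\mid\boldsymbol{Z}_{enc}))/(2\pi e)$, as claimed.

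The main obstacle I anticipate is bookkeeping the two inequalities so that they point in the same direction: both the maximum-entropy step and the Jensen step upper-bound the conditional entropy, which is precisely what is required to invert into a lower bound on the MMSE. The Gaussian assumption in the statement makes the first step an equality, so the only slack originates from Jensen's inequality, i.e. from the variability of the conditional variance across realizations of $\boldsymbol{Z}_{enc}$; I would remark on this to indicate when the bound is essentially tight.
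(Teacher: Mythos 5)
Your proof is correct, but it takes a genuinely different route from the paper's. The paper does not derive the entropy--MMSE inequality from first principles: it splits the network as $\boldsymbol{Z}_{enc} \xrightarrow{f_\texttt{Dec}} \boldsymbol{Z}_{dec} \xrightarrow{f_\texttt{linear}} \hat{\boldsymbol{Y}}$, invokes the bound $\text{MMSE} \ge \exp\big(2 H(\boldsymbol{Y}|\boldsymbol{Z}_{dec})\big)/(2\pi e)$ under the Gaussian assumption by citation to prior work \cite{carson2012communications,prasad2010certain}, and then transfers the bound from $\boldsymbol{Z}_{dec}$ to $\boldsymbol{Z}_{enc}$ using the fact that $\boldsymbol{Z}_{dec}=f_\texttt{dec}(\boldsymbol{Z}_{enc})$ implies $H(\boldsymbol{Y}|\boldsymbol{Z}_{dec}) \ge H(\boldsymbol{Y}|\boldsymbol{Z}_{enc})$. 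You instead prove the inequality directly at the level of $\boldsymbol{Z}_{enc}$ via the classical argument (conditional-mean optimality, Gaussian maximum entropy at fixed variance, then Jensen), i.e., the textbook estimation-error bound of \cite{cover1999elements}. Your route is more self-contained and strictly more informative: it shows the Gaussian assumption is not actually needed (Gaussianity only makes the maximum-entropy step an equality), and it localizes the residual slack in the Jensen gap, i.e., the variability of $\operatorname{Var}(\boldsymbol{Y}\mid\boldsymbol{Z}_{enc})$ across realizations of $\boldsymbol{Z}_{enc}$. What the paper's route buys is that it threads the bound through the actual decoder/linear head of the architecture, matching its narrative around Eq.~1, at the cost of an extra data-processing step and reliance on a cited black-box result.

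One small point to tighten: your argument as written treats $\boldsymbol{Y}$ as scalar, working with a single conditional variance. Since $\boldsymbol{Y}\in\mathbb{R}^{T_{out}\times N}$, a fully rigorous multivariate version should either apply the bound coordinate-wise or use $H(\boldsymbol{Y}\mid\boldsymbol{z}) \le \frac{1}{2}\log\big((2\pi e)^{d}\det\boldsymbol{\Sigma}_{\boldsymbol{Y}\mid\boldsymbol{z}}\big)$ together with the AM--GM inequality to pass from the determinant to the trace (the MSE). The lemma itself is stated in the scalar form, so this imprecision is inherited from the statement rather than introduced by you, but it is worth a remark.
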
 
  \begin{proof}  Please refer to Appendix {A}.\end{proof}

\begin{remark}
 Lemma~\ref{lemma:1} is supported by our earlier observations that diverse and rich latent representations ($\boldsymbol{Z}_{{enc}}$) generally result in a lower MSE. This reduction in MSE stems from a decrease in conditional entropy $H(\boldsymbol{Y} | \boldsymbol{Z}_{{enc}})$, which reduces the uncertainty of $\boldsymbol{Y}$ given $\boldsymbol{Z}_{{enc}}$.
\end{remark}

Therefore, by adding the complementary sequence, we may proactively lead to a lower conditional entropy,
\begin{theorem}\label{thm:1}
    The integration of the proposed complementary sequence lowers the bound of MMSE:
    \begin{align}
    H(\boldsymbol{Y} |\boldsymbol{Z}_{{enc}},\boldsymbol{S})\leq  H(\boldsymbol{Y} |\boldsymbol{Z}_{{enc}}).
    \end{align}  
\end{theorem}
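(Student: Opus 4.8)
The plan is to reduce the statement to the elementary information-theoretic principle that conditioning cannot increase entropy, which I would establish by writing the gap between the two conditional entropies as a (non-negative) conditional mutual information. Concretely, I treat $\boldsymbol{Y}$, $\boldsymbol{Z}_{enc}$, and $\boldsymbol{S}$ as jointly distributed random quantities induced by the data distribution and the trained network, so that all three conditional entropies are well defined.

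First I would recall the identity relating conditional entropy to conditional mutual information,
\begin{align}
    I(\boldsymbol{Y};\boldsymbol{S}\mid\boldsymbol{Z}_{enc}) = H(\boldsymbol{Y}\mid\boldsymbol{Z}_{enc}) - H(\boldsymbol{Y}\mid\boldsymbol{Z}_{enc},\boldsymbol{S}),
\end{align}
which follows directly from the chain rule for entropy applied to the pair $(\boldsymbol{Y},\boldsymbol{S})$ conditioned on $\boldsymbol{Z}_{enc}$. Rearranging this identity already isolates the target inequality, so it remains only to show that the left-hand side is non-negative.

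The key step is therefore to argue $I(\boldsymbol{Y};\boldsymbol{S}\mid\boldsymbol{Z}_{enc})\geq 0$. I would express this conditional mutual information as an expectation over $\boldsymbol{Z}_{enc}$ of a Kullback--Leibler divergence between the joint conditional law of $(\boldsymbol{Y},\boldsymbol{S})$ and the product of its conditional marginals; each such divergence is non-negative by Gibbs' inequality (equivalently, by applying Jensen's inequality to the convex function $-\log$). Since the expectation of non-negative quantities is non-negative, we obtain $H(\boldsymbol{Y}\mid\boldsymbol{Z}_{enc},\boldsymbol{S})\leq H(\boldsymbol{Y}\mid\boldsymbol{Z}_{enc})$. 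Chaining this with Lemma~\ref{lemma:1} then shows the MMSE bound achievable with the complementor is no larger than without it, which is the practical payoff of the theorem.

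I expect the only genuine subtlety --- rather than a real obstacle --- to be the status of $\boldsymbol{S}$. Because $\boldsymbol{S}$ is a learnable sequence, one must clarify in what sense it acts as a conditioning variable: the inequality degenerates to an equality if $\boldsymbol{S}$ is regarded as a fixed deterministic constant, so the informative regime is the one in which $\boldsymbol{S}$ interacts with the input through self-attention and thereby carries information about $\boldsymbol{Y}$ not already captured by $\boldsymbol{Z}_{enc}$. I would therefore make the equality condition explicit: the inequality is strict precisely when $\boldsymbol{Y}$ and $\boldsymbol{S}$ are \emph{not} conditionally independent given $\boldsymbol{Z}_{enc}$, i.e.\ when the complementor supplies genuinely complementary information. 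This is exactly the design intent and motivates the diversification objective introduced later.
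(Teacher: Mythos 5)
Your proof is correct and takes essentially the same route as the paper: the paper's entire proof is a one-line appeal to the monotonicity of conditional entropy (``conditioning on additional information cannot increase the uncertainty about $\boldsymbol{Y}$''), which is precisely the fact you establish explicitly via the chain-rule identity $I(\boldsymbol{Y};\boldsymbol{S}\mid\boldsymbol{Z}_{enc}) = H(\boldsymbol{Y}\mid\boldsymbol{Z}_{enc}) - H(\boldsymbol{Y}\mid\boldsymbol{Z}_{enc},\boldsymbol{S})$ and the non-negativity of conditional mutual information. Your closing caveat --- that the inequality degenerates to equality when $\boldsymbol{S}$ is regarded as a fixed deterministic parameter, so the theorem is only informative when $\boldsymbol{S}$ genuinely carries information about $\boldsymbol{Y}$ beyond $\boldsymbol{Z}_{enc}$ --- is a real subtlety that the paper's proof does not acknowledge.
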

  \begin{proof}  Please refer to Appendix {A}.\end{proof}
  
These two justifications converge to our intuitive observations, where the entropy $H(\boldsymbol{Z}_{enc})$ denotes the amount of knowledge we know and conditional entropy $H(\boldsymbol{Y}|\boldsymbol{Z}_{{enc}})$ presents the uncertainty about $\boldsymbol{Y}$ after knowing $\boldsymbol{Z}_{{enc}}$. Adding more information to the system, in this case, through complementary sequences, enriches our knowledge about $\boldsymbol{Y}$, and hence potentially leads to improved performance.
  


\subsection{Diversified Complementary Sequence}
We can further diversify or orthogonalize the learnable complementary sequence to enforce them to inject more information. 
Although it is easy to initialize the Sequence Complementors from a set of orthogonal bases, they may not necessarily be orthogonal during network optimization. As a consequence, they are likely to converge to a trivial solution where all complementary sequences are highly similar or identical. To tackle this issue, we propose a diversification loss to diversify the complementary sequences by leveraging the volume maximization of sub-matrices \cite{kulesza2012determinantal,petit2023water}. Different from their problems, which try to find an optimized set of indices that maximizes the volume (i.e., a discrete optimization), we propose a differentiable loss that can be learned through backpropagation with any automatic differentiation framework. 
\begin{definition}
    For given Sequence Complementors $\boldsymbol{S}\in \mathbb{R}^{ K \times P}$, the volume is defined as,
    \begin{align}
    \text{vol}(\boldsymbol{S}) = \prod_{i=1}^K (\sigma_{\boldsymbol{S}} )_i, 
    \end{align}
    where $(\sigma_{\boldsymbol{S}} )_i$ is the $i$-th largest non-zero singular value of $\boldsymbol{S}$.
\end{definition}
\begin{theorem}
    If $\forall \|\boldsymbol{S}_i\|=1, i\in [K]$ holds, maximizing $\text{vol}(\boldsymbol{S})$ results in $\boldsymbol{S}_i \perp \boldsymbol{S}_j,$ for all $  \forall i \neq j,i,j \in [K]$.
\end{theorem}
\begin{proof}
    This can be proven via the arithmetic mean-geometric mean (AM-GM) inequality. Please refer to Appendix {A}.
\end{proof}

\begin{table*}[!t]
\centering
\caption{The results of multivariate long-term time series forecasting. We fix the look-back window length to 96 for all methods, and the forecast horizons are set as $\left\{96, 192, 336, 720 \right\}$. We highlight the best results in \textbf{bold} and the second-best results with \underline{underlining}. \textit{Avg.} denotes the average results from all four prediction lengths. All reported results are averaged over 5 runs.}
\label{tab:LONG}
\resizebox{0.95\textwidth}{!}{%
\begin{tabular}{cl|cc|cc|cc|cc|cc|cc|cc|cc|cc|cc|cc} \toprule
\multicolumn{2}{c}{\textbf{Model}} & \multicolumn{2}{c}{\textbf{Ours}} & \multicolumn{2}{c}{\textbf{\begin{tabular}[c]{@{}c@{}}iTransformer\\      ICLR'24\end{tabular}}} & \multicolumn{2}{c}{\textbf{\begin{tabular}[c]{@{}c@{}}Rlinear\\      Arxiv'23\end{tabular}}} & \multicolumn{2}{c}{\textbf{\begin{tabular}[c]{@{}c@{}}PatchTST\\      ICLR'23\end{tabular}}} & \multicolumn{2}{c}{\textbf{\begin{tabular}[c]{@{}c@{}}Crossformer\\      ICLR'23\end{tabular}}} & \multicolumn{2}{c}{\textbf{\begin{tabular}[c]{@{}c@{}}TimesNet\\      ICLR'23\end{tabular}}} & \multicolumn{2}{c}{\textbf{\begin{tabular}[c]{@{}c@{}}Dlinear\\      AAAI'23\end{tabular}}} & \multicolumn{2}{c}{\textbf{\begin{tabular}[c]{@{}c@{}}SCINet\\      NeurIPS'22\end{tabular}}} & \multicolumn{2}{c}{\textbf{\begin{tabular}[c]{@{}c@{}}FEDformer\\      ICML'22\end{tabular}}} & \multicolumn{2}{c}{\textbf{\begin{tabular}[c]{@{}c@{}}Stationary\\      NeurIPS'22\end{tabular}}} & \multicolumn{2}{c}{\textbf{\begin{tabular}[c]{@{}c@{}}Autoformer\\      NeurIPS'21\end{tabular}}} \\ \midrule
\multicolumn{2}{c}{Metric} & MSE & MAE & MSE & MAE & MSE & MAE & MSE & MAE & MSE & MAE & MSE & MAE & MSE & MAE & MSE & MAE & MSE & MAE & MSE & MAE & MSE & MAE \\ \midrule
 & 96 & \textbf{ 0.321} & \textbf{ 0.359} & 0.334 & 0.368 & 0.355 & 0.376 & \underline{ 0.329} & \underline{ 0.367} & 0.404 & 0.426 & 0.338 & 0.375 & 0.345 & 0.372 & 0.418 & 0.438 & 0.379 & 0.419 & 0.386 & 0.398 & 0.505 & 0.475 \\
 & 192 & \textbf{ 0.362} & \underline{ 0.386} & 0.377 & 0.391 & 0.391 & 0.392 & \underline{ 0.367} & \textbf{ 0.385} & 0.450 & 0.451 & 0.374 & 0.387 & 0.380 & 0.389 & 0.439 & 0.450 & 0.426 & 0.441 & 0.459 & 0.444 & 0.553 & 0.496 \\
 & 336 & \textbf{ 0.393} & \textbf{ 0.406} & 0.426 & 0.420 & 0.424 & 0.415 & \underline{ 0.399} & \underline{ 0.410} & 0.532 & 0.515 & 0.410 & 0.411 & 0.413 & 0.413 & 0.490 & 0.485 & 0.445 & 0.459 & 0.495 & 0.464 & 0.621 & 0.537 \\
 & 720 & \textbf{ 0.450} & \underline{ 0.442} & 0.491 & 0.459 & 0.487 & 0.450 & \underline{ 0.454} & \textbf{ 0.439} & 0.666 & 0.589 & 0.478 & 0.450 & 0.474 & 0.453 & 0.595 & 0.550 & 0.543 & 0.490 & 0.585 & 0.516 & 0.671 & 0.561 \\
\multirow{-5}{*}{\rotatebox{90}{ETTm1}} & Avg. & \textbf{ 0.381} & \textbf{ 0.398} & 0.407 & 0.410 & 0.414 & 0.408 & \underline{ 0.387} & \underline{ 0.400} & 0.513 & 0.496& 0.400 & 0.406 & 0.403 & 0.407 & 0.485& 0.481 & 0.448 & 0.452 & 0.481 & 0.456 & 0.588 & 0.517 \\ \midrule
 & 96 & \textbf{ 0.172} & \textbf{ 0.256} & 0.180 & 0.264 & 0.182 & 0.265 & \underline{ 0.175} & \underline{ 0.259} & 0.287 & 0.366 & 0.187 & 0.267 & 0.193 & 0.292 & 0.286 & 0.377 & 0.203 & 0.287 & 0.192 & 0.274 & 0.255 & 0.339 \\
 & 192 & \textbf{ 0.236} & \textbf{ 0.299} & 0.250 & 0.309 & 0.246 & 0.304 & \underline{ 0.241} & \underline{ 0.302} & 0.414 & 0.492 & 0.249 & 0.309 & 0.284 & 0.362 & 0.399 & 0.445 & 0.269 & 0.328 & 0.280 & 0.339 & 0.281 & 0.340 \\
 & 336 & \textbf{ 0.298} & \textbf{ 0.339} & 0.311 & 0.348 & 0.307 & \underline{ 0.342} & \underline{ 0.305} & 0.343 & 0.597 & 0.542 & 0.321 & 0.351 & 0.369 & 0.427 & 0.637 & 0.591 & 0.325 & 0.366 & 0.334 & 0.361 & 0.339 & 0.372 \\
 & 720 & \textbf{ 0.395} & \textbf{ 0.397} & 0.412 & 0.407 & 0.407 & \underline{ 0.398} & \underline{ 0.402} & 0.400 & 1.730 & 1.042 & 0.408 & 0.403 & 0.554 & 0.522 & 0.960 & 0.735 & 0.421 & 0.415 & 0.417 & 0.413 & 0.433 & 0.432 \\
\multirow{-5}{*}{\rotatebox{90}{ETTm2}} & Avg. & \textbf{ 0.275} & \textbf{ 0.323} & 0.288 & 0.332 & 0.286 & 0.327 & \underline{ 0.281} & \underline{ 0.326} & 0.757 & 0.610& 0.291 & 0.333 & 0.350 & 0.401 & 0.571 & 0.537 & 0.305 & 0.349 & 0.306 & 0.347 & 0.327 & 0.371 \\ \midrule
 & 96 & \textbf{ 0.375} & \textbf{ 0.394} & 0.386 & 0.405 & 0.386 & \underline{ 0.395} & 0.414 & 0.419 & 0.423 & 0.448 & 0.384 & 0.402 & 0.386 & 0.400 & 0.654 & 0.599 & \underline{ 0.376} & 0.419& 0.513 & 0.491 & 0.449 & 0.459 \\
 & 192 & \underline{ 0.423} & \underline{ 0.425} & 0.441 & 0.436 & 0.437 & \textbf{ 0.424} & 0.460 & 0.445 & 0.471 & 0.474 & 0.436 & 0.429 & 0.437 & 0.432 & 0.719 & 0.631 & \textbf{ 0.420} & 0.448 & 0.534 & 0.504 & 0.500 & 0.482 \\
 & 336 & \textbf{ 0.457} & \underline{ 0.448} & 0.487 & 0.458 & 0.479 & \textbf{ 0.446} & 0.501 & 0.466 & 0.570 & 0.546 & 0.491 & 0.469 & 0.481 & 0.459 & 0.778 & 0.659 & \underline{ 0.459} & 0.465 & 0.588 & 0.535 & 0.521 & 0.496 \\
 & 720 & \textbf{ 0.462} & \underline{ 0.472} & 0.503 & 0.491 & \underline{ 0.481} & \textbf{ 0.470} & 0.500 & 0.488 & 0.653 & 0.621 & 0.521 & 0.500 & 0.519 & 0.516 & 0.836 & 0.699 & 0.506 & 0.507 & 0.643 & 0.616 & 0.514 & 0.512 \\
\multirow{-5}{*}{\rotatebox{90}{ETTh1}} & Avg. & \textbf{ 0.429} & \underline{ 0.435} & 0.454 & 0.448 & 0.446 & \textbf{ 0.434} & 0.469 & 0.454& 0.529 & 0.522 & 0.458 & 0.450 & 0.456 & 0.452 & 0.747 & 0.647 & \underline{ 0.440} & 0.460& 0.570 & 0.537 & 0.496 & 0.487 \\ \midrule
 & 96 & \textbf{ 0.283} & \textbf{ 0.337} & 0.297 & 0.349 & \underline{ 0.288} & \underline{ 0.338} & 0.302 & 0.348 & 0.745 & 0.584 & 0.340 & 0.374 & 0.333 & 0.387 & 0.707 & 0.621 & 0.358 & 0.397 & 0.476 & 0.458 & 0.346 & 0.388 \\
 & 192 & \textbf{ 0.362} & \textbf{ 0.388} & 0.380 & 0.400 & \underline{ 0.374} & \underline{ 0.390} & 0.388 & 0.400 & 0.877 & 0.656 & 0.402 & 0.414 & 0.477 & 0.476 & 0.860 & 0.689 & 0.429 & 0.439 & 0.512 & 0.493 & 0.456 & 0.452 \\
 & 336 & \textbf{ 0.408} & \textbf{ 0.425} & 0.428 & 0.432 & \underline{ 0.415} & \underline{ 0.426} & 0.426 & 0.433 & 1.043 & 0.731 & 0.452 & 0.452 & 0.594 & 0.541 & 1.000 & 0.744 & 0.496 & 0.487 & 0.552 & 0.551 & 0.482 & 0.486 \\
 & 720 & \textbf{ 0.419} & \underline{ 0.441} & 0.427 & 0.445 & \underline{ 0.420} & \textbf{ 0.440} & 0.431 & 0.446 & 1.104 & 0.763 & 0.462 & 0.468 & 0.831 & 0.657 & 1.249 & 0.838 & 0.463 & 0.474 & 0.562 & 0.560 & 0.515 & 0.511 \\
\multirow{-5}{*}{\rotatebox{90}{ETTh2}} & Avg. & \textbf{ 0.368} & \textbf{ 0.398} & 0.383 & 0.407 & \underline{ 0.374} & \underline{ 0.399} & 0.387 & 0.407 & 0.942 & 0.684 & 0.414 & 0.427 & 0.559 & 0.515 & 0.954 & 0.723 & 0.437 & 0.449 & 0.526 & 0.516 & 0.450 & 0.459 \\ \midrule
 & 96 & \underline{ 0.156} & \underline{ 0.252} & \textbf{ 0.148} & \textbf{ 0.240} & 0.201 & 0.281 & 0.181 & 0.270 & 0.219 & 0.314 & 0.168 & 0.272 & 0.197 & 0.282 & 0.247 & 0.345 & 0.193 & 0.308 & 0.169 & 0.273 & 0.201 & 0.317 \\
 & 192 & \underline{ 0.175} & \underline{ 0.269} & \textbf{ 0.162} & \textbf{ 0.253} & 0.201 & 0.283 & 0.188 & 0.274 & 0.231 & 0.322 & 0.184 & 0.289 & 0.196 & 0.285 & 0.257 & 0.355 & 0.201 & 0.315 & 0.182 & 0.286 & 0.222 & 0.334 \\
 & 336 & \underline{ 0.190} & \underline{ 0.284} & \textbf{ 0.178} & \textbf{ 0.269} & 0.215 & 0.298 & 0.204 & 0.293 & 0.246 & 0.337 & 0.198 & 0.300 & 0.209 & 0.301 & 0.269 & 0.369 & 0.214 & 0.329 & 0.200 & 0.304 & 0.231 & 0.338 \\
 & 720 & 0.246 & 0.324 & 0.225 & \textbf{ 0.317} & 0.257 & 0.331 & 0.246 & 0.324 & 0.280 & 0.363 & \textbf{ 0.220} & \underline{ 0.320} & 0.245 & 0.333 & 0.299 & 0.390 & 0.246 & 0.355 & \underline{ 0.222} & 0.321 & 0.254 & 0.361 \\
\multirow{-5}{*}{\rotatebox{90}{ECL}} & Avg. & \underline{ 0.192} & \underline{ 0.282} & \textbf{ 0.178} & \textbf{ 0.270} & 0.219 & 0.298 & 0.205 & 0.290 & 0.244 & 0.334 & 0.192& 0.295 & 0.212 & 0.300 & 0.268 & 0.365 & 0.214 & 0.327 & 0.193 & 0.296 & 0.227 & 0.338 \\ \midrule
 & 96 & \textbf{ 0.082} & \textbf{ 0.200} & \underline{ 0.086} & 0.206 & 0.093 & 0.217 & 0.088 & \underline{ 0.205} & 0.256 & 0.367 & 0.107 & 0.234 & 0.088 & 0.218 & 0.267 & 0.396 & 0.148 & 0.278 & 0.111 & 0.237 & 0.197 & 0.323 \\
 & 192 & \textbf{ 0.175} & \textbf{ 0.297} & 0.177 & \underline{ 0.299} & 0.184 & 0.307 & \underline{ 0.176} & \underline{ 0.299} & 0.470 & 0.509 & 0.226 & 0.344 & \underline{ 0.176} & 0.315 & 0.351 & 0.459 & 0.271 & 0.315 & 0.219 & 0.335 & 0.300 & 0.369 \\
 & 336 & 0.325 & \underline{ 0.413} & 0.331 & 0.419 & 0.351 & 0.432 & \textbf{ 0.301} & \textbf{ 0.397} & 1.268 & 0.883 & 0.367 & 0.448 & \underline{ 0.313} & 0.427 & 1.324 & 0.853 & 0.460 & 0.427 & 0.421 & 0.476 & 0.509 & 0.524 \\
 & 720 & \underline{ 0.840} & \textbf{ 0.690} & 0.847 & \underline{ 0.691} & 0.886 & 0.714 & 0.901 & 0.714 & 1.767 & 1.068 & 0.964 & 0.746 & \textbf{ 0.839} & 0.695 & 1.058 & 0.797 & 1.195 & 0.695 & 1.092 & 0.769 & 1.447 & 0.941 \\
\multirow{-5}{*}{\rotatebox{90}{Exchange}} & Avg. & \underline{ 0.356} & \textbf{ 0.400} & 0.360 & \underline{ 0.404} & 0.378& 0.417& 0.367 & 0.404 & 0.940 & 0.707 & 0.416 & 0.443 & \textbf{ 0.354} & 0.414 & 0.750 & 0.626 & 0.519 & 0.429 & 0.461 & 0.454 & 0.613 & 0.539 \\ \midrule
 & 96 & \underline{ 0.455} & 0.291 & \textbf{ 0.395} & \textbf{ 0.268} & 0.649 & 0.389 & 0.462 & 0.295 & 0.522 & \underline{ 0.290} & 0.593 & 0.321 & 0.650 & 0.396 & 0.788 & 0.499 & 0.587 & 0.366 & 0.612 & 0.338 & 0.613 & 0.388 \\
 & 192 & \underline{ 0.460} & \underline{ 0.291} & \textbf{ 0.417} & \textbf{ 0.276} & 0.601 & 0.366 & 0.466 & 0.296 & 0.530 & 0.293 & 0.617 & 0.336 & 0.598 & 0.370 & 0.789 & 0.505 & 0.604 & 0.373 & 0.613 & 0.340 & 0.616 & 0.382 \\
 & 336 & \underline{ 0.473} & \underline{ 0.302} & \textbf{ 0.433} & \textbf{ 0.283} & 0.609 & 0.369 & 0.482 & 0.304 & 0.558 & 0.305 & 0.629 & 0.336 & 0.605 & 0.373 & 0.797 & 0.508 & 0.621 & 0.383 & 0.618 & 0.328 & 0.622 & 0.337 \\
 & 720 & \underline{ 0.500} & \underline{ 0.321} & \textbf{ 0.467} & \textbf{ 0.302} & 0.647 & 0.387 & 0.514 & 0.322 & 0.589 & 0.328 & 0.640 & 0.350 & 0.645 & 0.394 & 0.841 & 0.523 & 0.626 & 0.382 & 0.653 & 0.355 & 0.660 & 0.408 \\
\multirow{-5}{*}{\rotatebox{90}{Traffic}} & Avg. & \underline{ 0.472} & \underline{ 0.301} & \textbf{ 0.428} & \textbf{ 0.282} & 0.627 & 0.378 & 0.481 & 0.304 & 0.550 & 0.304 & 0.620 & 0.336 & 0.625 & 0.383 & 0.804 & 0.509 & 0.610 & 0.376 & 0.624 & 0.340 & 0.628 & 0.379 \\ \midrule
 & 96 & \underline{ 0.159} & \textbf{ 0.206} & 0.174 & \underline{ 0.214} & 0.192 & 0.232 & 0.177 & 0.218 & \textbf{ 0.158} & 0.230 & 0.172 & 0.220 & 0.196 & 0.255 & 0.221 & 0.306 & 0.217 & 0.296 & 0.173 & 0.223 & 0.266 & 0.336 \\
 & 192 & \textbf{ 0.205} & \textbf{ 0.249} & 0.221 & \underline{ 0.254} & 0.240 & 0.271 & 0.225 & 0.259 & \underline{ 0.206} & 0.277 & 0.219 & 0.261 & 0.237 & 0.296 & 0.261 & 0.340 & 0.276 & 0.336 & 0.245 & 0.285 & 0.307 & 0.367 \\
 & 336 & \textbf{ 0.263} & \textbf{ 0.291} & 0.278 & \underline{ 0.296} & 0.292 & 0.307 & 0.278 & 0.297 & \underline{ 0.272} & 0.335 & 0.280 & 0.306 & 0.283 & 0.335 & 0.309 & 0.378 & 0.339 & 0.380 & 0.321 & 0.338 & 0.359 & 0.395 \\
 & 720 & \textbf{ 0.344} & \textbf{ 0.345} & 0.358 & \underline{ 0.347} & 0.364 & 0.353 & 0.354 & 0.348 & 0.398 & 0.418 & 0.365 & 0.359 & \underline{ 0.345} & 0.381 & 0.377 & 0.427 & 0.403 & 0.428 & 0.414 & 0.410 & 0.419 & 0.428 \\
\multirow{-5}{*}{\rotatebox{90}{Weather}} & Avg. & \textbf{ 0.243} & \textbf{ 0.273} & \underline{ 0.258} & \underline{ 0.278} & 0.272 & 0.291 & 0.259 & 0.281 & 0.259 & 0.315 & 0.259 & 0.287 & 0.265 & 0.317 & 0.292 & 0.363 & 0.309 & 0.360 & 0.288 & 0.314 & 0.338 & 0.382 \\ \midrule
\multicolumn{2}{c}{First Count} & \multicolumn{2}{c}{\textbf{47}} & \multicolumn{2}{c}{\underline{19}} & \multicolumn{2}{c}{5} & \multicolumn{2}{c}{4} & \multicolumn{2}{c}{1} & \multicolumn{2}{c}{1} & \multicolumn{2}{c}{2} & \multicolumn{2}{c}{0} & \multicolumn{2}{c}{1} & \multicolumn{1}{c}{0} & \multicolumn{1}{c}{} & \multicolumn{2}{c}{0} \\ \bottomrule
\end{tabular}%
}
\end{table*}

\begin{table*}[!t]
\centering
\caption{The results of multivariate short-term time series forecasting. We highlight the best results in \textbf{bold} and the second-best results with \underline{underlining}. All reported results are averaged over 5 runs. For the details of the metrics, please refer to Appendix {D}.}
\label{tab:SHORT}
\resizebox{0.9\textwidth}{!}{%
\begin{tabular}{clccccccccccc} \toprule
\multicolumn{2}{c}{\textbf{Models}} & \textbf{Ours} & \textbf{\begin{tabular}[c]{@{}c@{}}Rlinear\\      Arxiv'23\end{tabular}} & \textbf{\begin{tabular}[c]{@{}c@{}}PatchTST\\      ICLR'23\end{tabular}} & \textbf{\begin{tabular}[c]{@{}c@{}}Crossformer \\      ICLR'23\end{tabular}} & \textbf{\begin{tabular}[c]{@{}c@{}}FEDformer \\      ICML'22\end{tabular}} & \textbf{\begin{tabular}[c]{@{}c@{}}TimesNet\\      ICLR'23\end{tabular}} & \textbf{\begin{tabular}[c]{@{}c@{}}DLinear \\      AAAI'23\end{tabular}} & \textbf{\begin{tabular}[c]{@{}c@{}}SCINet \\      NeurIPS'22\end{tabular}} & \textbf{\begin{tabular}[c]{@{}c@{}}FEDformer\\      ICML'22\end{tabular}} & \textbf{\begin{tabular}[c]{@{}c@{}}Stationary\\      NeurIPS'22\end{tabular}} & \textbf{\begin{tabular}[c]{@{}c@{}}Autoformer\\      NeurIPS'21\end{tabular}} \\ \midrule
 & SMAPE & \textbf{ 13.185} & 13.994 & \underline{ 13.258} & 13.392 & 13.728 & 13.387 & 16.965 & 13.717 & 13.728 & 13.717 & 13.974 \\
 & MASE & \textbf{ 2.955} & 3.015 & \underline{ 2.985} & 3.001 & 3.048 & 2.996 & 4.283 & 3.076 & 3.048 & 3.078 & 3.134 \\
\multirow{-3}{*}{Year} & OWA & \textbf{ 0.775} & 0.807 & \underline{ 0.781} & 0.787 & 0.803 & 0.786 & 1.058 & 0.807 & 0.803 & 0.807 & 0.822 \\ \midrule
 & SMAPE & \textbf{ 9.989} & 10.702 & 10.179 & 16.317 & 10.792 & \underline{ 10.1} & 12.145 & 10.845 & 10.792 & 10.958 & 11.338 \\
 & MASE &  \textbf{1.17} & 1.299 &  1.212 & 2.197 & 1.283 & \underline{1.182} & 1.52 & 1.295 & 1.283 & 1.325 & 1.365 \\
\multirow{-3}{*}{Quarterly} & OWA &  \textbf{0.88} & 0.959 &  0.904  & 1.542 & 0.958 & \underline{0.89} & 1.106 & 0.965 & 0.958 & 0.981 & 1.012 \\ \midrule
 & SMAPE & \textbf{ 12.453} & 13.363 & \underline{ 12.641} & 12.924 & 14.26 & 12.67 & 13.514 & 13.208 & 14.26 & 13.917 & 13.958 \\
 & MASE & \textbf{ 0.913} & 1.014 & \underline{ 0.93} & 0.966 & 1.102 & 0.933 & 1.037 & 0.999 & 1.102 & 1.097 & 1.103 \\
\multirow{-3}{*}{Monthly} & OWA & \textbf{ 0.861} & 0.94 & \underline{ 0.876} & 0.902 & 1.012 & 0.878 & 0.956 & 0.928 & 1.012 & 0.998 & 1.002 \\ \midrule
 & SMAPE & \textbf{ 4.565} & 5.437 & 4.946 & 5.439 & 4.954 & \underline{ 4.891} & 6.709 & 5.432 & 4.954 & 6.302 & 5.458 \\
 & MASE & \underline{ 3.144} & 3.706 & \textbf{ 2.985} & 3.69 & 3.264 & 3.302 & 4.953 & 3.583 & 3.264 & 4.064 & 3.865 \\
\multirow{-3}{*}{Others} & OWA & \textbf{ 0.976} & 1.157 & 1.044 & 1.16 & 1.036 & \underline{ 1.035} & 1.487 & 1.136 & 1.036 & 1.304 & 1.187 \\ \midrule
 & SMAPE & \textbf{ 11.636} & 12.473 & \underline{ 11.807} & 13.474 & 12.84 & 11.829 & 13.639 & 12.369 & 12.84 & 12.78 & 12.909 \\
 & MASE & \textbf{ 1.556} & 1.677 & 1.59 & 1.866 & 1.701 & \underline{ 1.585} & 2.095 & 1.677 & 1.701 & 1.756 & 1.771 \\
\multirow{-3}{*}{\begin{tabular}[c]{@{}c@{}}Weighted\\      Average\end{tabular}} & OWA & \textbf{ 0.836} & 0.898 & \underline{ 0.851} & 0.985 & 0.918 & \underline{ 0.851} & \underline{ 0.851} & 0.894 & 0.918 & 0.93 & 0.939 \\ \midrule
\multicolumn{2}{c}{First Count} & \textbf{ 14} & 0 & \underline{ 1} & 0 & 0 & 0 & 0 & 0 & 0 & 0 & 0 \\ \bottomrule
\end{tabular}%
}
\end{table*}

\subsubsection{Learning objective.} To avoid the product being an arbitrarily large number that burdens the computation, we take the logarithm to $\text{vol}(\boldsymbol{S})$, which leads to the proposed diversification loss $\mathcal{L}_{dcs}$ for learning complementary sequence:
\begin{align}
    \mathcal{L}_{dcs}(\boldsymbol{S}) = - \sum_i^K 2\log ((\sigma_{\boldsymbol{S}} )_i+\epsilon),\quad\text{s.t. } \forall \|\boldsymbol{S}_i\|=1, 
\end{align}
where $\epsilon$ is a very small number to avoid the logarithmic operation being negative infinite, and apparently a smaller $\mathcal{L}_{dcs}(\boldsymbol{S})$ denotes a larger volume. Computing this loss only requires $\mathcal{O}(K^2P)$ due to the SVD decomposition, which is negligible. We compute the diversification loss independently for each channel in parallel to minimize the computational cost. The overall objective function is a weighted combination of MSE loss and the proposed diversification loss:
\begin{align}
   \mathcal{L}_{obj} =  \mathcal{L}_{mse}(\hat{\boldsymbol{Y}},\boldsymbol{Y})+\lambda_{dcs}\mathcal{L}_{dcs}(\boldsymbol{S}) ,
\end{align}
where $\lambda_{dcs}>0$ is the balance weight. This is a tunable hyper-parameter and we set it to 0.1 for all experiments. The overall algorithmic summary is presented in Appendix {C}.

\section{Experiments and Results}

\paragraph{Experimental setup.} We evaluate our proposed architecture on both \textbf{long-term forecasting} task and \textbf{short-term forecasting} task with 8 datasets and 6 datasets, respectively. It is worth mentioning that each dataset in the long-term setting only contains one continuous time series, and the samples are obtained by sliding window, while M4 involves 100,000 different time series collected in different frequencies, which makes the forecasting on M4 become challenging \cite{wu2023timesnet}. For the metrics, we adopt the mean square error (MSE) and mean absolute error (MAE) for long-term forecasting, and following TimesNet \cite{wu2023timesnet}, the symmetric mean absolute percentage error (SMAPE), mean absolute scaled error (MASE) and overall weighted average (OWA) as the metrics for the short-term forecasting, where OWA is a special metric used in M4 competition.
The \textbf{lower} value implies a better performance for all metrics. A summary of the setup is shown in the right table. Please refer to Appendix {B} for details of the datasets and metrics. 
\begin{table}[]
\centering
\resizebox{0.4\textwidth}{!}{%
\begin{tabular}{l|c|c|c|c}\toprule
Tasks & Benchmarks & Metrics & Dim &  Length \\ \midrule
Long-term & \begin{tabular}[c]{@{}c@{}}ETTh1, ETTh2, ETTm1, ETTm2,\\      Electricity, Traffic, Weather, Exchange\end{tabular} & MSE, MAE & 7$\sim$862 & 96$\sim $720 \\ \midrule
Short-term & \begin{tabular}[c]{@{}c@{}}M4 Yearly/Quarterly/Monthly/\\      Weekly/Daily/Hourly (6 subsets)\end{tabular} & SMAPE, MASE, OWA & 1 & 6$\sim$ 48 \\ \bottomrule
\end{tabular}%
}
\end{table}


\setcounter{figure}{3}
\begin{figure*}[]
    \centering
    \includegraphics[width=1.0\textwidth]{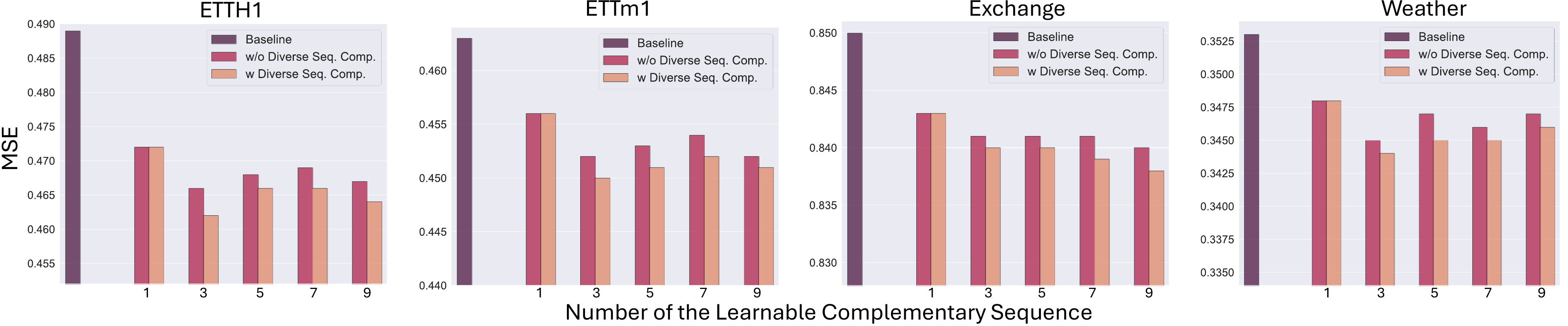}
    \caption{Ablation studies on the number of learnable Sequence Complementors and the diversified Sequence Complementors on different datasets. The results suggest that when the number of complementors is equal to \textbf{3}, the overall performance is desired.  }
    \label{fig:aba1}
\end{figure*}

Following \cite{wu2023timesnet, liu2024itransformer}, we fix the look-back window length to 96, and the forecast horizons are set as $\left\{96, 192, 336, 720 \right\}$. We include multiple recent transformer-based methods for long-term forecasting, such as PatchTST \cite{nie2023a}, iTransformer \cite{liu2024itransformer}, Crossformer \cite{zhang2023crossformer}, Fedformer \cite{zhou2022fedformer}, Autoformer \cite{wu2021autoformer}, Non-stationary Transformer \cite{liu2022non}.
We also include recent models with other architecture, such as TimesNet \cite{wu2023timesnet}, RLinear  \cite{li2023revisiting}, DLinear \cite{zeng2023transformers}, and SCINet \cite{liu2022scinet}. Since iTransformer is designed for learning cross-variate information, while short-term forecasting is univariate, hence we do not intend to include it in the comparison for short-term forecasting. In the implementation of our method, we use Adam optimizer \cite{kingma2014adam} and fix the learning rate to 0.0001. The number of the complementors is fixed at 3. Please refer to Appendix {C} for more details of the implementation. All results are reported over an average of 5 runs. The standard deviation and running time are reported in Appendix {D}, which confirms the robustness and efficiency of our method.

\paragraph{Main Results on Long-term Forecasting.} The quantitative results of the long-term time series forecasting are shown in Table \ref{tab:LONG}. The results demonstrate that the proposed model outperforms all other competing methods across most datasets and secures first place 47 times. In most other cases, our method can achieve second place, such as on ECL and Traffic datasets.
In comparison, the second-best method (i.e., iTransformer) achieves first place 19 times. Remarkably, Our method shows a significant improvement over PatchTST(\citeyear{nie2023a}) and wins it above 42 times across different settings and metrics. 
All of these results confirm the superiority of our method for long-term time series forecasting. The exemplary qualitative forecasting results are shown in Fig. \ref{fig:vis1}, where our method shows close coherence to the ground truth with minimal error, whereas some transformers (e.g., Non-stationary transformer and Crossformer) show disastrous alignment to the ground truth. We have additional visualization in Appendix {D}. Please also refer to Appendix {D} for the results on ILI dataset.
\setcounter{figure}{2}
\begin{figure}[!t]
    \centering
    \includegraphics[width=0.45\textwidth]{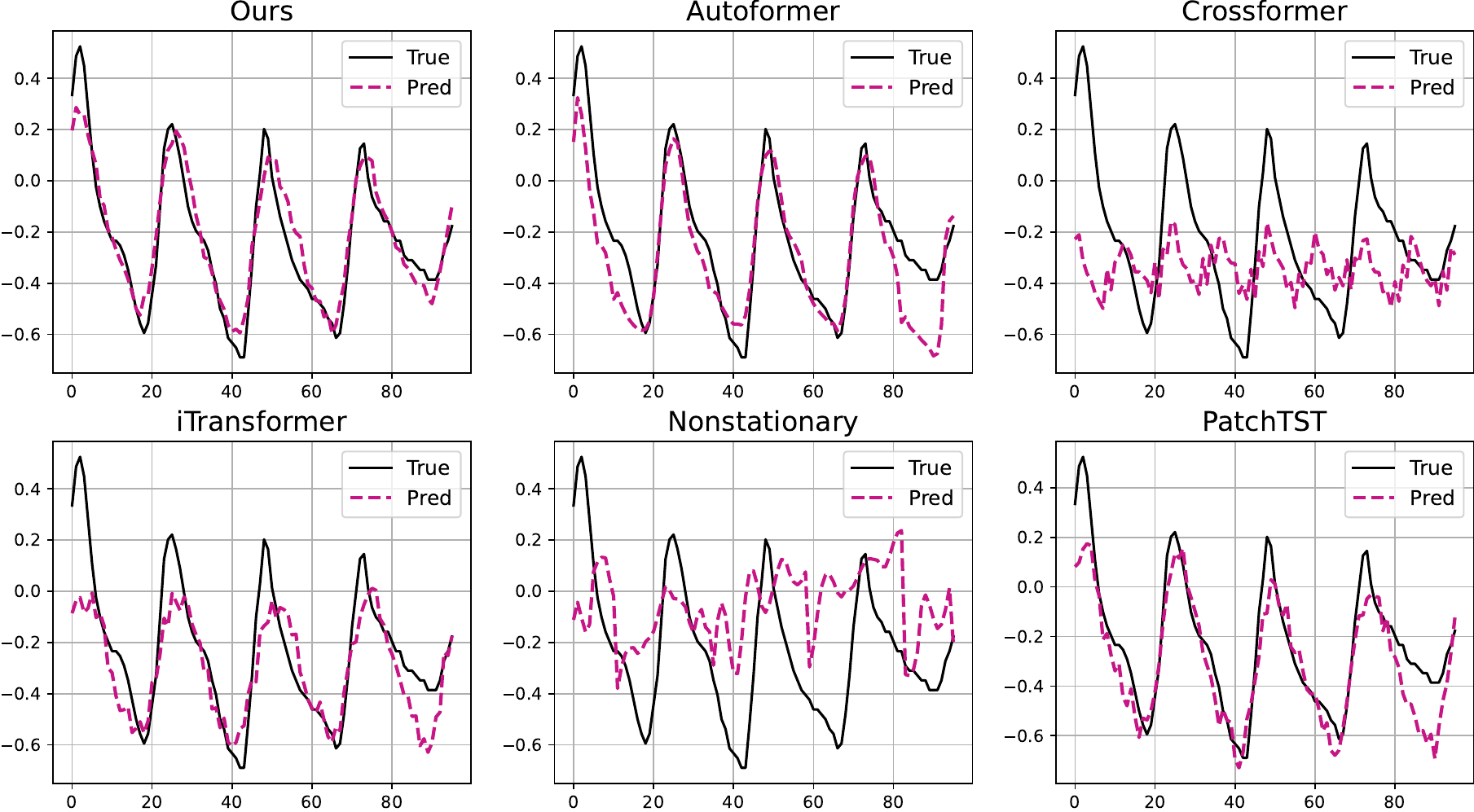}
    \caption{The qualitative results on ETTh2 Dataset.}
    \label{fig:vis1}
\end{figure}

\paragraph{Main Results on Short-term Forecasting.} The quantitative results for the short-term time series forecasting are shown in Table~\ref{tab:SHORT}. Different from the long-term forecasting tasks, the short-term forecasting tasks exhibit more temporal variations, as they are collected from different sources. 
The proposed still demonstrates superior performance on short-term forecasting tasks across different settings compared to other competing methods. Notably, our method secures the best performance 14 times out of all 15 settings. This suggests that the proposed method is even more effective on datasets with more temporal heterogeneity.


\section{Analysis}
\paragraph{Effectiveness of Sequence Complementors.} We conduct the ablation analysis on Exchange, Weather, ETTm1, and ETTh1 datasets, where the baseline is PatchTST. As shown in Fig. \ref{fig:aba1}, it is evident that adding complementary sequences reduces the forecasting errors. Though there is some variability in performance when using a different number of complementary sequences ($K$), the improvement becomes substantial when $K \geq 3$. Therefore, we fix the number to 3 in our main experiments. For a Wilcoxon signed-rank test, please see Appendix {D}.

\paragraph{Effectiveness of Diversification.} We conduct the ablations on the diversification loss when $K \geq 3$, as it is meaningless to diversify only one complementary sequence. 
As shown in Fig. \ref{fig:aba1},  our model can benefit from this diversification loss with an additional reduction in forecasting error. 
We provide the similarity matrix for the learned complementary sequences with/without diversification loss in Appendix {D}.



\paragraph{Case Study on Training Dynamics.}  
We conduct a case study on how the richness of the latent representations evolves during training (see Fig. \ref{fig:training_dynamics}). We observe that as training progresses, the richness of the learned representations increases, indicating that the model is learning more complex and diverse features. The curve might plateau once the model reaches a certain level of feature richness, suggesting convergence in feature learning. The main observation is that our model can consistently learn richer representations than the baseline and translate them to a lower forecasting error without slowing the model convergence. This again highlights the effectiveness of complementary sequences.

\setcounter{figure}{4}
\begin{figure}[!t]
    \centering
   \includegraphics[width=0.35\textwidth]{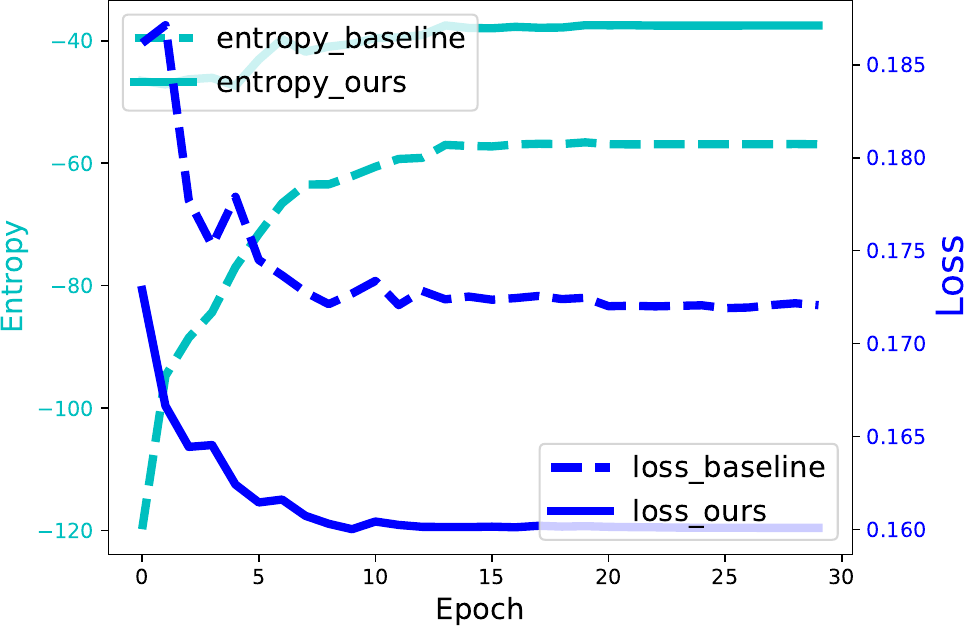}
    \caption{The comparison of training dynamics with and without Sequence Complementors.}
    \label{fig:training_dynamics}
\end{figure}

\paragraph{Case Study on iTransformer.} We conduct a case study on iTransformer to show that our method can also be applied to other transformers. 
Though iTransformer has achieved good performance for long-term time series forecasting tasks (see Table~\ref{tab:LONG}), its performance is further boosted after integrating the proposed Sequence Complementor (see Table~\ref{tab:case-study}). In particular, the integration of Sequence Complementor results in an average reduction of MSE and MAE by 2.4\% and 1.22\%,  respectively. The larger reduction rate of MSE over MAE matches our theoretical analysis that the integration of Sequence Complementor is likely to lower the bound of MSE (Theorem~\ref{thm:1}).
We also observe that the learned representations are enriched after the integration of our method into iTransoformer (for details, see Appendix D). These results suggest that the effectiveness of the proposed method is model-agnostic.

\begin{table}[!t]
\centering
\caption{The case study on iTransformer, where we add 3 complementary sequences to the patchified features in iTransformer. The performance is averaged over four prediction lengths. Please refer to Appendix {D} for full results.}
\label{tab:case-study}
\resizebox{0.4\textwidth}{!}{%
\begin{tabular}{l|cccccc}\toprule
Dataset & \multicolumn{2}{c}{ETTm1} & \multicolumn{2}{c}{ETTm2} & \multicolumn{2}{c}{ETTh1} \\
Metric & MSE & MAE & MSE & MAE & MSE & MAE \\ \midrule
\ \ \ iTransformer  & 0.407 & 0.41 & 0.288 & 0.332 & 0.454 & 0.447 \\
+ Seq. Comp.  & \textbf{0.393} & \textbf{0.402} & \textbf{0.285} & \textbf{0.329} & \textbf{0.443} & \textbf{0.441} \\
\ \ \ Reduction (\%) & 3.5\% & 2.0\% & 0.9\% &  0.8\% & 2.5\% & 1.4\% \\ \midrule
Dataset & \multicolumn{2}{c}{ETTh2} & \multicolumn{2}{c}{Exchange} & \multicolumn{2}{c}{Weather} \\
Metric & MSE & MAE & MSE & MAE & MSE & MAE \\ \midrule
\ \ \ iTransformer   & 0.383 & 0.407 & 0.36 & 0.403 & 0.258 & 0.278 \\
+ Seq. Comp.  & \textbf{0.376} & \textbf{0.403} & \textbf{0.346} & \textbf{0.398} & \textbf{0.254} & \textbf{0.276} \\
\ \ \ Reduction  (\%) & 1.9\% & 1.1\% & 3.8\% & 1.3\% & 1.7\% & 0.7\% \\ \bottomrule
\end{tabular}%
}
\end{table}

\section{Related Work}
Despite there being other pillars of models in time series forecasting, transformers have been one of the main streams since its introduction. This is because the self-attention in transformers is effective in capturing long-term temporal dependencies and complex multivariate correlations compared to other competing models, such as RNNs~\cite{lai2018modeling,patro2024simba}, MLPs~\cite{chen2023tsmixer,wang2024timemixer}, and CNNs~\cite{wu2023timesnet,donghao2024moderntcn,liu2022scinet,wu2023timesnet}. 
 However, the direct adoption of vanilla transformers faces several challenges in TSF, such as inadequate tokenization of time series sequences, inefficient processing of long sequences, difficulties in series rationalization, and insufficient modeling of variable and channel mutual information. To address these, many follow-up variants have been introduced for TSF. PatchTST \cite{nie2023a} addresses these problems by segmenting each channel/univariate time sequence into sub-level patchified sequences, designing a tokenization module inspired by ViT~\cite{vit}.
 iTransformer~\cite{liu2024itransformer} and Crossformer~\cite{zhang2023crossformer} further redesign the self-attention and feed-forward network to better capture cross-variate dependencies. Non-stationary Transformer \cite{liu2022non} introduces a simple yet effective self-attention for time-series stationarization, enhancing predictive capability for non-stationary series without adding extra parameters.  Fedformer \cite{zhou2022fedformer} and Autoformer \cite{wu2021autoformer} introduce frequency-enhanced and auto-correlation decomposition into vanilla transformers to efficiently handle the long-range time series with complex patterns. 

Although empirical studies such as \cite{zeng2023transformers} have questioned the effectiveness of transformers in TSF\footnote{A detailed discussion about the claim in \cite{zeng2023transformers} is provided in Appendix E.}, we hold the view that self-attention has untapped potentials for time series forecasting due to the modeling of long-range dependencies.
Our investigation suggests that the limitations of transformers in TSF stem from their inability to learn rich and generalizable feature representations with limited data. To address this, we propose diversified complementary sequences rather than altering architectural designs, as seen in~\cite{nie2023a, zhang2023crossformer, liu2024itransformer, liu2022non, zhou2022fedformer, wu2021autoformer}. Our method is orthogonal and can be integrated with these previous models.

\section{Conclusion}
In this paper, we investigate the potential reason that leads transformer-based methods with various performances. First, we conduct experiments on multiple recent transformer-based methods and measure the richness of their representation by rank measures and an information-theoretical measure. We show the interesting finding that a richer representation can often translate to a better forecasting performance. Based on this finding and guided by information-theoretical knowledge, we propose the \texttt{Sequence Complementors}, which can enhance the representation and be seamlessly integrated into the Transformer-based framework.
To further strengthen the complementors, we propose a differentiable volume maximization loss. The empirical results on 8 long-term forecasting datasets and 6 short-term forecasting datasets confirm the superiority of our proposed method. We hope this work provides new insights into understanding and designing transformers for time series forecasting.


\section*{Acknowledgment}
This material is based upon the work supported by the National Science Foundation under Grant Number 2204721 and partially supported by our collaborative project with MIT Lincoln Lab under Grant Number 7000612889.

\bibliography{aaai25}

\newpage
\section*{Appendix}
\maketitle

\appendix
\setcounter{secnumdepth}{1}

\renewcommand{\thefigure}{S\arabic{figure}}
\renewcommand{\thetable}{S\arabic{table}}

\section{Proofs}

\subsection{Proof of Justification 1}
\begin{align}
    H(\boldsymbol{Z}_{enc})\leq  H(\boldsymbol{Z}_{enc},\boldsymbol{S}).
\end{align}

\begin{proof}
We can decompose the joint entropy $H(\boldsymbol{Z}_{enc},\boldsymbol{S})$ to,
\begin{align}
    H(\boldsymbol{Z}_{enc},\boldsymbol{S}) = H(\boldsymbol{Z}_{enc})+H(\boldsymbol{S}|\boldsymbol{Z}_{enc}).
\end{align}
Since the conditional entropy $H(\boldsymbol{S}|\boldsymbol{Z}_{enc})$ is always non-negative \cite{cover1999elements}, the proof is completed.

\end{proof}
\subsection{Proof of Lemma 1}
\begin{lemma}
    Under Gaussian assumption, the minimum mean-squared error (MMSE), is bounded by,  
   \begin{align}
       \text{MMSE} \overset{}{\geq} \frac{\exp2 H(\boldsymbol{Y} |\boldsymbol{Z}_{enc} )}{2\pi e} 
       .
        \end{align}
        Here,  $H(\cdot|\cdot)$ denotes the conditional entropy. 
\end{lemma}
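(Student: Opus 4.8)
The plan is to exploit the maximum-entropy property of the Gaussian distribution---namely that, among all random variables of a given variance, the Gaussian maximizes the differential entropy. First I would identify the estimator attaining the MMSE, which under the mean-squared criterion is the conditional mean $\hat{\boldsymbol{Y}}=\mathbb{E}[\boldsymbol{Y}\mid\boldsymbol{Z}_{enc}]$, and define the estimation error $\boldsymbol{e}=\boldsymbol{Y}-\hat{\boldsymbol{Y}}$. Because $\mathbb{E}[\boldsymbol{e}]=\mathbf{0}$ by the tower property, the error variance coincides with its second moment, so that $\mathrm{Var}(\boldsymbol{e})=\mathbb{E}[\|\boldsymbol{e}\|^2]=\text{MMSE}$.

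The heart of the argument is the chain
\begin{align}
H(\boldsymbol{Y}\mid\boldsymbol{Z}_{enc}) = H(\boldsymbol{e}\mid\boldsymbol{Z}_{enc}) \le H(\boldsymbol{e}) \le \tfrac{1}{2}\log\!\big(2\pi e\,\mathrm{Var}(\boldsymbol{e})\big),
\end{align}
which I would justify one link at a time. The first equality follows from the translation invariance of differential entropy: conditioned on $\boldsymbol{Z}_{enc}$, the term $\hat{\boldsymbol{Y}}$ is a deterministic function of the conditioning variable, and shifting by a quantity that is constant within each conditioning slice leaves differential entropy unchanged. The middle inequality is the standard fact that conditioning cannot increase entropy \cite{cover1999elements}. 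The final inequality is precisely the maximum-entropy bound, and this is where the Gaussian assumption enters---equality holds exactly when $\boldsymbol{e}$ is Gaussian, so the bound is tight under the stated model. Substituting $\mathrm{Var}(\boldsymbol{e})=\text{MMSE}$, multiplying through by $2$, and exponentiating gives $\exp\!\big(2H(\boldsymbol{Y}\mid\boldsymbol{Z}_{enc})\big)\le 2\pi e\cdot\text{MMSE}$, which rearranges to the claimed lower bound.

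The main obstacle I anticipate is not any single inequality---each is classical---but rather stating the hypotheses cleanly so that every step is valid: I must confirm that the MMSE estimator is genuinely the conditional mean (which needs the relevant second moments to be finite) and that translation invariance passes through the conditional entropy as an average over $\boldsymbol{Z}_{enc}=\boldsymbol{z}$, which it does because the shift $\hat{\boldsymbol{Y}}(\boldsymbol{z})$ is fixed on each such slice. I would also handle the scalar-versus-multivariate bookkeeping carefully, since the stated bound carries a single factor $2\pi e$ rather than $(2\pi e)^d$; the clean route is to present the argument for a scalar target and invoke it componentwise, or equivalently to replace $2\pi e$ by $(2\pi e)^{d}$ and $\mathrm{Var}(\boldsymbol{e})$ by the determinant of the error covariance, matching the Gaussian-entropy formula introduced earlier.
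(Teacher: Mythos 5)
Your proof is correct, but it takes a genuinely different route from the paper's. The paper does not prove the core entropy--MMSE inequality itself: it isolates the last linear layer of the network, writing the chain $\boldsymbol{Z}_{enc} \xrightarrow{f_\texttt{Dec}} \boldsymbol{Z}_{dec} \xrightarrow{f_\texttt{linear}} \hat{\boldsymbol{Y}}$, invokes the bound $\text{MMSE} \geq \exp\bigl(2 H(\boldsymbol{Y}|\boldsymbol{Z}_{dec})\bigr)/(2\pi e)$ as a cited result from the communications literature, and then transfers it to $\boldsymbol{Z}_{enc}$ via the monotonicity $H(\boldsymbol{Y}|\boldsymbol{Z}_{dec}) \geq H(\boldsymbol{Y}|\boldsymbol{Z}_{enc})$, which holds because $\boldsymbol{Z}_{dec}$ is a deterministic function of $\boldsymbol{Z}_{enc}$. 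You instead prove the inequality from first principles directly at $\boldsymbol{Z}_{enc}$: the conditional mean attains the MMSE, translation invariance gives $H(\boldsymbol{Y}|\boldsymbol{Z}_{enc}) = H(\boldsymbol{e}|\boldsymbol{Z}_{enc})$, conditioning cannot increase entropy, and the Gaussian maximum-entropy bound closes the chain --- essentially the classical estimation-error theorem in Cover and Thomas. Your approach buys self-containedness and also clarifies a point the paper's phrasing obscures: Gaussianity is needed only for \emph{tightness} of the bound, not for its validity, so the lemma holds more generally than stated. It also skips the detour through the decoder entirely, which in the paper exists only to position the cited result. What the paper's route buys is brevity (the analytic core is imported by citation) and a cosmetic tie to the network architecture. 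Your flagged concern about the scalar-versus-multivariate factor is warranted and is a real looseness in the paper's statement, which silently uses the scalar form $2\pi e$ rather than $(2\pi e)^d$ or a determinant of the error covariance; your componentwise or determinant-based repair is the right fix and is consistent with the Gaussian-entropy definition the paper itself introduces earlier.
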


\begin{proof}
We first construct the chain $\boldsymbol{Z}_{enc}$ similar to Eq. 1 in the main manuscript,
\begin{align}\label{eq:trans0}
     \cdots\xrightarrow{}\boldsymbol{Z}_{enc}  \xrightarrow{f_\texttt{Dec}}  \boldsymbol{Z}_{dec} 
   \xrightarrow{f_\texttt{linear}}\hat{\boldsymbol{Y}},
\end{align}
Here, we take out the last linear layer separately for the convenience of proving the lemma.
Now, we can rewrite the propagation in the last layer as,
\begin{align}
    \hat{\boldsymbol{Y}} =\texttt{reshape}(\text{vec}(\boldsymbol{Z}_{dec} )\boldsymbol{W}).
\end{align}
Under Gaussian assumption, i.e. both $\boldsymbol{Z}_{dec} $, $ \hat{\boldsymbol{Y}}$  , and $\boldsymbol{Y}$ obey Gaussian distribution, according to \cite{carson2012communications,prasad2010certain}, we can obtain,
\begin{align}\label{eq:l1}
      \text{MMSE} \overset{}{\geq} \frac{\exp2 H(\boldsymbol{Y} |\boldsymbol{Z}_{dec} )}{2\pi e}.
\end{align}

We observe $\boldsymbol{Z}_{dec}=f_\texttt{dec}(\boldsymbol{Z}_{enc})$, due to the chain property \cite{cover1999elements}, $\boldsymbol{Z}_{enc}$ should always contains equal or more information of $\boldsymbol{Z}_{dec}$, hence knowing $\boldsymbol{Z}_{enc}$ can further reduce the uncertainty about $\boldsymbol{Y}$,
\begin{align}\label{eq:l2}
H(\boldsymbol{Y} |\boldsymbol{Z}_{dec} ) \geq H(\boldsymbol{Y} |\boldsymbol{Z}_{enc}).
\end{align}
Then, substitute Eq. \ref{eq:l2} to Eq. \ref{eq:l1}, the proof is completed.
 
\end{proof}

\subsection{Proof of Theorem 1}
\begin{theorem}
    The integration of the proposed complementary sequences lowers the bound of MMSE:
    \begin{align}
    H(\boldsymbol{Y} |\boldsymbol{Z}_{{enc}},\boldsymbol{S})\leq  H(\boldsymbol{Y} |\boldsymbol{Z}_{{enc}}).
    \end{align}  
\end{theorem}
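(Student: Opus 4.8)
The plan is to recognize this statement as a direct instance of the elementary information-theoretic principle that \emph{conditioning never increases (conditional) entropy}, equivalently that conditional mutual information is non-negative. So rather than manipulating the MMSE bound of Lemma~\ref{lemma:1} again, I would work purely at the level of entropies and treat $\boldsymbol{S}$ as an extra conditioning variable appended to $\boldsymbol{Z}_{enc}$.

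First I would rewrite the gap between the two conditional entropies as a conditional mutual information, using only the chain-rule definition:
\begin{align}
H(\boldsymbol{Y}\mid\boldsymbol{Z}_{enc}) - H(\boldsymbol{Y}\mid\boldsymbol{Z}_{enc},\boldsymbol{S}) = I(\boldsymbol{Y};\boldsymbol{S}\mid\boldsymbol{Z}_{enc}).
\end{align}
Second, I would invoke the standard fact (e.g.\ from \cite{cover1999elements}, via Jensen's inequality applied to the conditional Kullback--Leibler divergence) that conditional mutual information is always non-negative, i.e.\ $I(\boldsymbol{Y};\boldsymbol{S}\mid\boldsymbol{Z}_{enc})\geq 0$. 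Combining the two displays and rearranging yields $H(\boldsymbol{Y}\mid\boldsymbol{Z}_{enc},\boldsymbol{S})\leq H(\boldsymbol{Y}\mid\boldsymbol{Z}_{enc})$, which is exactly the claim. The interpretation is the intuitive one already foreshadowed in the two justifications: jointly observing the complementary sequence $\boldsymbol{S}$ can only weakly sharpen our knowledge of $\boldsymbol{Y}$, so the residual uncertainty about $\boldsymbol{Y}$ cannot grow, and by Lemma~\ref{lemma:1} this tightens (lowers) the MMSE bound.

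I expect the only genuine subtlety to be a technical one rather than a conceptual one: because $\boldsymbol{Y}$, $\boldsymbol{Z}_{enc}$ and $\boldsymbol{S}$ are continuous vectors, the quantities involved are \emph{differential} entropies, so one must ensure the relevant (conditional) densities exist and that $I(\boldsymbol{Y};\boldsymbol{S}\mid\boldsymbol{Z}_{enc})$ is well-defined and finite before the non-negativity argument applies. Under the Gaussian assumption already adopted for Lemma~\ref{lemma:1} this is automatic, since jointly Gaussian vectors have finite differential entropies and well-defined conditional densities; I would simply note this to make the argument rigorous. No heavy computation is needed, as the proof reduces to one identity plus the non-negativity of conditional mutual information.
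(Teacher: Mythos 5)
Your proposal is correct and follows essentially the same route as the paper: the paper's proof simply invokes the fact that conditioning on additional information ($\boldsymbol{S}$) cannot increase the uncertainty about $\boldsymbol{Y}$, and your argument via the identity $H(\boldsymbol{Y}\mid\boldsymbol{Z}_{enc}) - H(\boldsymbol{Y}\mid\boldsymbol{Z}_{enc},\boldsymbol{S}) = I(\boldsymbol{Y};\boldsymbol{S}\mid\boldsymbol{Z}_{enc}) \geq 0$ is exactly the standard proof of that monotonicity fact. Your added remark on differential entropies under the Gaussian assumption is a welcome rigor point the paper omits, but it does not constitute a different approach.
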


\begin{proof}
This is trivial as conditional information is monotonicity, meaning conditioning on additional information (in this case, the complementary sequences $\boldsymbol{S}$) is impossible to increase the uncertainty about $\boldsymbol{Y}$.
\end{proof}

\subsection{Proof of Theorem 2}
\begin{theorem}
    If $\forall \|\boldsymbol{S}_i\|=1, i\in [K]$ holds, maximizing $\text{vol}(\boldsymbol{S})$ results in $\boldsymbol{S}_i \perp \boldsymbol{S}_j,$ for all $  \forall i \neq j,i,j \in [K]$.
\end{theorem}
\begin{proof}
We can prove the theorem by the arithmetic mean-geometric mean (AM-GM) inequality \cite{steele2004cauchy}.
 We  rewrite $ \prod_{i=1}^K (\sigma_{\boldsymbol{S}} )_i=\sqrt{\operatorname{det}(\boldsymbol{S}\boldsymbol{S}^T)}$, where the strictly positive singular values of $(\boldsymbol{S}\boldsymbol{S}^T)$ are $\{(\sigma_{\boldsymbol{S}} )_i^2\}_{i=1}^K$.
Applying AM-GM inequality to $(\sigma_{\boldsymbol{S}} )_i^2$,
\begin{align}\label{eq:p1}
    \frac{\sum_i^K(\sigma_{\boldsymbol{S}} )_i^2}{K} \geq \sqrt[K]{ \cdot \prod_i^K(\sigma_{\boldsymbol{S}} )_i^2}.
\end{align}
Since $\|\boldsymbol{S}_i\|=1$, we have,
\begin{align}\label{eq:p2}
 \sqrt{K} =  \|\boldsymbol{S}\|_F = \sqrt{\text{Tr}(\boldsymbol{S}\boldsymbol{S}^T))} = \sqrt{\sum_i^K(\sigma_{\boldsymbol{S}} )_i^2}.
\end{align}
Hence, $\sum_i^K(\sigma_{\boldsymbol{S}} )_i^2=K$. Substitute it to Eq. \ref{eq:p1},
\begin{align}
    1\geq \sqrt[K]{ \cdot \prod_i^K(\sigma_{\boldsymbol{S}} )_i^2} = \sqrt[K]{\text{vol}(\boldsymbol{S})^2}.
\end{align}
According to AM-GM inequality, the equality of Eq. \ref{eq:p1} holds if and only if $(\sigma_{\boldsymbol{S}} )_1^2=\cdots=(\sigma_{\boldsymbol{S}} )_K^2 = \frac{K}{K}=1$. At this situation, $\boldsymbol{S}\boldsymbol{S}^T=\boldsymbol{I}_k$, where $\boldsymbol{I}_k$ is the identity matrix with shape $K\times K$. Recap that each element of $(\boldsymbol{S}\boldsymbol{S}^T)$ denotes the cosine similarity between $\boldsymbol{S}_j,\boldsymbol{S}_i$, hence, when the $\text{vol}(\boldsymbol{S})$ is maximized, for every $i \neq j,i,j \in [K]$, $\boldsymbol{S}_i\boldsymbol{S}_j^\top = 0$ implies $\boldsymbol{S}_i$ and $\boldsymbol{S}_j$ are orthogonal.
\end{proof}

\section{Experiment Setup}
\begin{table*}[!t]
  \caption{Dataset descriptions. The dataset size is organized in (Train, Validation, Test).}\label{tab:dataset}
  \centering
  \begin{threeparttable}
  \begin{small}
  \renewcommand{\multirowsetup}{\centering}
  \setlength{\tabcolsep}{4pt}
  \begin{tabular}{c|l|c|c|c|c}
    \toprule
    Tasks & Dataset & Dim & Series Length & Dataset Size & \scalebox{0.8}{Information (Frequency)} \\
    \toprule
     & ETTm1, ETTm2 & 7 & \scalebox{0.8}{\{96, 192, 336, 720\}} & (34465, 11521, 11521) & \scalebox{0.8}{Electricity (15 mins)}\\
    \cmidrule{2-6}
     & ETTh1, ETTh2 & 7 & \scalebox{0.8}{\{96, 192, 336, 720\}} & (8545, 2881, 2881) & \scalebox{0.8}{Electricity (15 mins)} \\
    \cmidrule{2-6}
    Forecasting & Electricity & 321 & \scalebox{0.8}{\{96, 192, 336, 720\}} & (18317, 2633, 5261) & \scalebox{0.8}{Electricity (Hourly)} \\
    \cmidrule{2-6}
    (Long-term) & Traffic & 862 & \scalebox{0.8}{\{96, 192, 336, 720\}} & (12185, 1757, 3509) & \scalebox{0.8}{Transportation (Hourly)} \\
    \cmidrule{2-6}
     & Weather & 21 & \scalebox{0.8}{\{96, 192, 336, 720\}} & (36792, 5271, 10540) & \scalebox{0.8}{Weather (10 mins)} \\
    \cmidrule{2-6}
     & Exchange & 8 & \scalebox{0.8}{\{96, 192, 336, 720\}} & (5120, 665, 1422) & \scalebox{0.8}{Exchange rate (Daily)}\\
    \cmidrule{2-6}
     & ILI & 7 & \scalebox{0.8}{\{24, 36, 48, 60\}} & (617, 74, 170) & \scalebox{0.8}{Illness (Weekly)} \\
    \midrule
     & M4-Yearly & 1 & 6 & (23000, 0, 23000) & \scalebox{0.8}{Demographic} \\
    \cmidrule{2-5}
     & M4-Quarterly & 1 & 8 & (24000, 0, 24000) & \scalebox{0.8}{Finance} \\
    \cmidrule{2-5}
    Forecasting & M4-Monthly & 1 & 18 & (48000, 0, 48000) & \scalebox{0.8}{Industry} \\
    \cmidrule{2-5}
    (short-term) & M4-Weakly & 1 & 13 & (359, 0, 359) & \scalebox{0.8}{Macro} \\
    \cmidrule{2-5}
     & M4-Daily & 1 & 14 & (4227, 0, 4227) & \scalebox{0.8}{Micro} \\
    \cmidrule{2-5}
     & M4-Hourly & 1 &48 & (414, 0, 414) & \scalebox{0.8}{Other} \\
    \bottomrule
    \end{tabular}
    \end{small}
  \end{threeparttable}
\end{table*}
\subsection{Datasets}
Here, we adopt the description of datasets from \cite{wu2023timesnet} as shown in Table \ref{tab:dataset} below. These train/val/test splits are adhered to previous work.

\subsection{Metrics}

We skip reviewing MSE AND MAE here as they are very commonly used. For short-term forecasting, we adopt the symmetric mean absolute percentage error (SMAPE), mean absolute scaled error (MASE) and overall weighted average (OWA) as the metrics. It is worth mentioning that  OWA is a special metric used in M4 competition. They are presented as follow:
\begin{align*} \label{equ:metrics}
    \text{SMAPE} &= \frac{200}{T_{out}} \sum_{i=1}^{T_{out}} \frac{|\boldsymbol{Y}_{i} - \widehat{\boldsymbol{Y}}_{i}|}{|\boldsymbol{Y}_{i}| + |\widehat{\boldsymbol{Y}}_{i}|}, \\
    \text{MAPE} &= \frac{100}{T_{out}} \sum_{i=1}^{T_{out}} \frac{|\boldsymbol{Y}_{i} - \widehat{\boldsymbol{Y}}_{i}|}{|\boldsymbol{Y}_{i}|}, \\
    \text{MASE} &= \frac{1}{T_{out}} \sum_{i=1}^{T_{out}} \frac{|\boldsymbol{Y}_{i} - \widehat{\boldsymbol{Y}}_{i}|}{\frac{1}{{T_{out}}-m}\sum_{j=m+1}^{{T_{out}}}|\boldsymbol{Y}_j - \boldsymbol{Y}_{j-m}|}, \\
    \text{OWA} &= \frac{1}{2} \left[ \frac{\text{SMAPE}}{\text{SMAPE}_{\textrm{Naïve2}}}  + \frac{\text{MASE}}{\text{MASE}_{\textrm{Naïve2}}}  \right],
\end{align*}
where $m$ is the periodicity of the data. $\boldsymbol{Y},\widehat{\boldsymbol{Y}}\in\mathbb{R}^{T_{out}\times N}$ are the ground truth and prediction results of the future with $T_{out}$ time pints and $N$ dimensions. $\boldsymbol{Y}_{i}$ means the $i$-th future time point.

\section{Implementation Detail}
\subsection{Configuration of Main Experiment}
For the results of comparison methods, we adopt the values reported by iTransformer \cite{liu2024itransformer} for the long-term forecasting and TimesNet \cite{wu2023timesnet}  for the short-term forecasting. We have run their code and found the results are reproducible. 

We implemented all experiments on a cluster node with NVIDIA V100 (16 GB). We use Pytorch Library \cite{paszke2019pytorch} with version of 1.13. We use Adam optimizer \cite{kingma2014adam} and fix the learning rate to 0.0001. The other parameter is the default, as presented in Pytorch. The data pre-processing script, training framework, and module (e.g, transformer block), are built upon \url{https://github.com/thuml/Time-Series-Library}. The number of the complementors is fixed to 3. For the long-term forecasting task, the embedding dimension is set to \{128, 512\}.  The number of encoder blocks is searched from \{1,2,3\}, and the number of heads in each transformer block is fixed to 4. Having said that, we make consistent configurations for each dataset with different prediction lengths. All results are reported over an average of 5 runs.
For the long-term forecasting task, the mini-batch size is fixed to 16, the number of encoder blocks is fixed to 2, and the embedding dimension is set to 512. We also applied RevIN \cite{kim2022reversible} aligned with previous works.

\subsection{Configuration of the Analysis of Transformers}

Our primary investigation was conducted on the \textbf{Weather} dataset, where we fixed the dimensions of features to 128. We train these models with default parameters and collect the representations of 640 samples from each model after training the network well. In section \ref{appdenx:d}, we present an additional investigation on the ETTm1 dataset, which illustrates a similar conclusion. 

\newpage
\subsection{Algorithmic Summary (Channel-independent)}

\begin{algorithm}[]
\caption{Forward Propagation w/ Comp. (one-channel)}
\begin{algorithmic}[1]
\REQUIRE Input Sequence ${\boldsymbol{X}}[:,i]$, Complementors ${\boldsymbol{S}}$, model parameters: $f_\texttt{linear}, f_{\texttt{enc}},  f_{\texttt{enc}}$.
\ENSURE Forecasting data $\hat{\boldsymbol{Y}}$.
\STATE Patchify: $  \Tilde{\boldsymbol{Z}}_0\in\mathbb{R}^{ (L_c) \times P}\leftarrow \texttt{unrolling}( {\boldsymbol{X}}[:,i])$.
\STATE Add Comp.: $\Tilde{\boldsymbol{Z}}_0\in\mathbb{R}^{ (L_c+K) \times P} \leftarrow \texttt{concat}(\boldsymbol{Z}_0,\boldsymbol{S})$.
\STATE Embedding: $ \boldsymbol{Z}_{embed}\in\mathbb{R}^{ (L_c+K) \times D } \leftarrow f_\texttt{linear}(\Tilde{\boldsymbol{Z}}_0)$.
\STATE Pos. Encoding : $f_{\texttt{pos}}(\boldsymbol{Z}_{embed}[:L_c])+\boldsymbol{Z}_{embed}[:L_c]$, $\boldsymbol{Z}_{embed} \leftarrow \texttt{concat}( \boldsymbol{Z}_{embed}[:L_c],\boldsymbol{Z}_{embed}[L_c+1:])$.
\STATE Encoding: $\boldsymbol{Z}_{enc}\in\mathbb{R}^{ (L_c+K) \times D }\leftarrow f_{\texttt{enc}}(\boldsymbol{Z}_{embed})$.
\STATE Decoding: $\hat{\boldsymbol{Y}}= f_\texttt{dec}(\boldsymbol{Z}_{enc}[:L_c])$.

\end{algorithmic}
\label{alg:1}
\end{algorithm}

\subsection{Integrating Sequence Complementors into iTransformer  (iTrans.)}
Here, we provide an algorithmic summary of how to integrate the proposed Sequence Complementor into iTransformer.

\begin{algorithm}[]
\caption{Forward Propagation w/ Comp. (iTrans.)}
\begin{algorithmic}[1]
\REQUIRE Input Sequence ${\boldsymbol{X}}\in \mathbb{R}^{T_{in} \times N}$, Complementors ${\boldsymbol{S}}$, model parameters: $f_\texttt{linear}, f_{\texttt{enc}},  f_{\texttt{enc}}$.
\ENSURE Forecasting data $\hat{\boldsymbol{Y}}$.
\STATE Invert: $  \Tilde{\boldsymbol{Z}}_0\in\mathbb{R}^{ N \times T_{in}}\leftarrow \boldsymbol{X}.\texttt{transpose}$.
\STATE Add Comp.: $\Tilde{\boldsymbol{Z}}_0\in\mathbb{R}^{ (N+K) \times T_{in}} \leftarrow \texttt{concat}(\boldsymbol{Z}_0,\boldsymbol{S})$.
\STATE Embedding: $ \boldsymbol{Z}_{embed}\in\mathbb{R}^{ (N+K) \times D } \leftarrow f_\texttt{linear}(\Tilde{\boldsymbol{Z}}_0)$.
\STATE Encoding: $\boldsymbol{Z}_{enc}\in\mathbb{R}^{ (N+K) \times D }\leftarrow f_{\texttt{enc}}(\boldsymbol{Z}_{embed})$.
\STATE Decoding: $\hat{\boldsymbol{Y}}= \texttt{reshape}(f_\texttt{dec}(\boldsymbol{Z}_{enc}[:L_c]))$.

\end{algorithmic}
\label{alg:2}
\end{algorithm}

\section{Additional Results}

\subsection{Additional Investigation}\label{appdenx:d}
Here, we present the investigation, including iTransformer for both the Weather and ETTm1 datasets in Fig. \ref{fig:appendix}. The results support our observation that richer learned representations are associated with lower forecasting errors. The Pearson correlation test yielded a high statistical negative correlation (i.e., the absolute value of the correlation coefficient greater than 0.8 and the p-value less than 0.05).

\begin{figure*}[!t]
    \centering
    \includegraphics[width=0.87\textwidth]{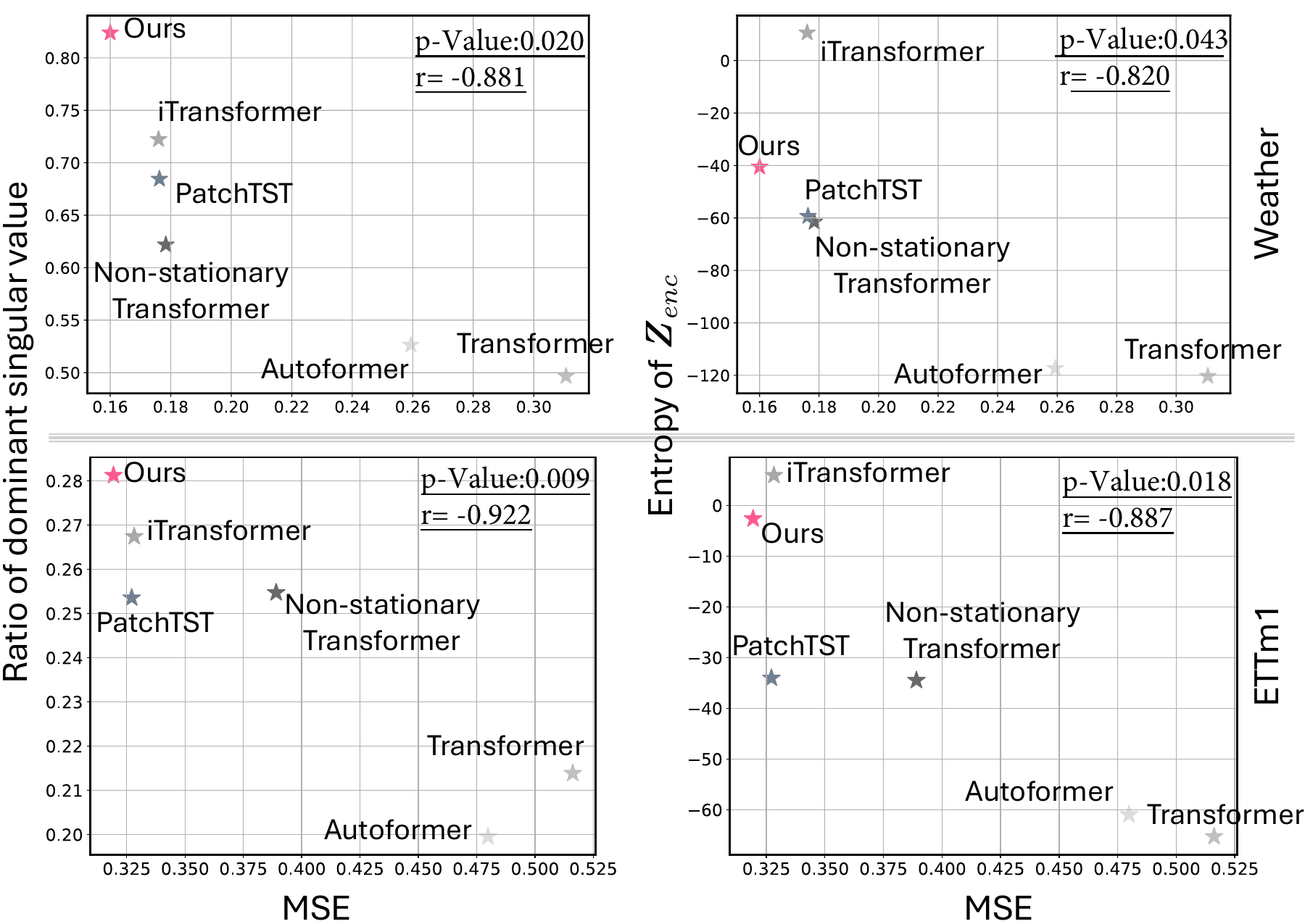}
    \caption{The investigation of the relations between learned representation and forecasting error by different Transofmer-based models. The Pearson correlation coefficient 
    $r$ quantifies the relationship between MSE and our two measures of feature richness. A value close to -1 indicates a strong negative correlation between these two quantities (p-Value < 0.05; a statistically significant correlation). These results support our claim that the richness of learned latent representations is negatively correlated to MSE.}
    \label{fig:appendix}
\end{figure*}

\subsection{Standard Deviation}
We reported the standard deviation on the ETT, exchange, and weather datasets, as shown in Table \ref{tab:sd}, where the standard deviation is at the 1e-4 or 1e-5 level, which is reasonably small. 
\begin{table}[!t]
\centering
\caption{The standard deviation of our methods.}
\label{tab:sd}
\resizebox{0.45\textwidth}{!}{%
\begin{tabular}{cccccc} \toprule
\multicolumn{2}{c}{ETTh1} & \multicolumn{2}{c}{ETTh2} & \multicolumn{2}{c}{ETTm1} \\ \midrule
MSE & MAE & MSE & MAE & MSE & MAE \\ \midrule
7.33E-05 & 4.72E-04 & 5.56E-06 & 5.39E-04 & 1.57E-04 & 5.63E-04 \\
2.70E-04 & 3.03E-04 & 2.72E-05 & 6.58E-04 & 1.06E-05 & 6.48E-04 \\
2.81E-04 & 4.87E-04 & 1.87E-04 & 2.97E-04 & 4.19E-04 & 8.54E-04 \\
5.73E-04 & 3.45E-03 & 1.24E-04 & 5.22E-04 & 7.00E-05 & 6.36E-04 \\ \midrule
\multicolumn{2}{c}{ETTm2} & \multicolumn{2}{c}{Exchange} & \multicolumn{2}{c}{weather} \\ \midrule
MSE & MAE & MSE & MAE & MSE & MAE \\ \midrule
4.79E-05 & 4.09E-04 & 2.18E-05 & 1.37E-04 & 5.81E-05 & 2.05E-04 \\
1.05E-04 & 4.69E-05 & 2.53E-06 & 3.13E-05 & 1.99E-04 & 7.92E-05 \\
3.88E-04 & 8.97E-04 & 2.98E-04 & 4.56E-04 & 3.58E-04 & 6.34E-04 \\
1.56E-04 & 8.84E-04 & 7.18E-04 & 5.59E-04 & 1.05E-03 & 9.11E-04 \\ \bottomrule
\end{tabular}%
}
\end{table}

\subsection{Running Time}
We present the running time in Table \ref{tab:runtime}, which is reasonable as around only 10\% additional time cost in ECL dataset.

\begin{table}[]
\centering
\caption{The running time per 100 iterations. }
\label{tab:runtime}
\resizebox{0.45\textwidth}{!}{%
\begin{tabular}{cccccc}\toprule
\multicolumn{1}{l}{Dataset} & Prediction Length & 96 & 192 & 336 & 720 \\ \midrule
\multirow{3}{*}{ECL} & Baseline & 0.6647 & 0.7318 & 0.8907 & 1.1477 \\
 & Ours & 0.7507 & 0.7929 & 0.9443 & 1.1981 \\
 & Increment & +12.94\% & +8.35\% & +6.02\% & +4.39\% \\ 
 \bottomrule
\end{tabular}%
}
\end{table}

\subsection{Wilcoxon signed-rank test}
the Wilcoxon signed-rank test is a non-parametric statistical test, which is typically used as an alternative to the paired t-test when the data cannot be assumed to be normally distributed. As suggested by \cite{demvsar2006statistical}, we can use it to compare two models across different datasets. Here, we construct the test between our method and the baseline on all datasets. As shown in Table \ref{tab:stest}, the p-value is tiny and much smaller than 0.05, which confirms the statistical superiority of our method.


\begin{table}[!t]
    \centering
      \caption{The results of Wilcoxon signed-rank test on both MSE and MAE.}
      \label{tab:stest}
    \resizebox{0.25\textwidth}{!}{%
    \begin{tabular}{c|cc} \toprule
         & MSE
 & MSE\\ \midrule
     Test Statistic  & 26
 & 35.5
 \\
p-value &3.77E-07
  &  7.47E-07
\\ \bottomrule
    \end{tabular}
    }

    \label{tab:my_label}
\end{table}

\subsection{Additional Results on Illness}
Since the setup on Illness (ILI) differs from the setups of other long-term forecasting datasets, we present its results here. The results shown in Table \ref{tab:ili} imply a consistent superiority of our method.

\begin{table*}[!t]
\centering
\caption{Forecasting resulsts on ILI dataset. The best results are in \textbf{bold}.}
\label{tab:ili}
\resizebox{0.92\textwidth}{!}{%
\begin{tabular}{c|cccccccccccccccccc}\toprule
 & \multicolumn{2}{c}{Ours} & \multicolumn{2}{c}{iTransformer} & \multicolumn{2}{c}{PatchTST} & \multicolumn{2}{c}{Crossformer} & \multicolumn{2}{c}{TimesNet} & \multicolumn{2}{c}{Dlinear} & \multicolumn{2}{c}{FEDformer} & \multicolumn{2}{c}{Stationary} & \multicolumn{2}{c}{Autoformer} \\ 
& MSE & MAE & MSE & MAE & MSE & MAE & MSE & MAE & MSE & MAE & MSE & MAE & MSE & MAE & MSE & MAE & MSE & MAE \\ \midrule
24 & \textbf{2.175} & \textbf{0.854} & 2.186 & 0.931 & 2.234 & 0.891 & 3.461 & 1.237 & 2.317 & 0.934 & 2.398 & 1.040 & 3.228 & 1.260 & 2.294 & 0.945 & 3.483 & 1.287 \\
36 & 2.342 & 0.906 & 1.860 & 0.864 & 2.316 & 0.932 & 3.762 & 2.175 & 1.972 & 0.920 & 2.646 & 1.088 & 2.679 & 1.080 & \textbf{1.825} & \textbf{0.848} & 3.103 & 1.148 \\
48 & 2.091 & \textbf{0.890} & 2.092 & 0.928 & 2.153 & 0.900 & 3.853 & 1.307 & 2.238 & 0.940 & 2.614 & 1.086 & 2.622 & 1.078 & \textbf{2.010} & 0.900 & 2.669 & 1.085 \\
60 & \textbf{1.999} & \textbf{0.897} & 2.018 & 0.921 & 2.029 & 0.910 & 4.035 & 1.344 & 2.027 & 0.928 & 2.804 & 1.146 & 2.857 & 1.157 & 2.178 & 0.963 & 2.770 & 1.125 \\
Avg. & 2.152 & \textbf{0.887} & \textbf{2.039} & 0.911 & 2.183 & 0.908 & 3.778 & 1.516 & 2.139 & 0.931 & 2.616 & 1.090 & 2.847 & 1.144 & 2.077 & 0.914 & 3.006 & 1.161 \\ \midrule
\end{tabular}%
}
\end{table*}

\subsection{Diversification Loss}
We show the similarity matrix in Fig. \ref{fig:div}, and obviously, the complementors learning with Diversification loss have become almost orthogonal with each other, suggesting that the diversity is maximization.

\begin{figure}[!t]
    \centering
    \includegraphics[width=0.9\linewidth]{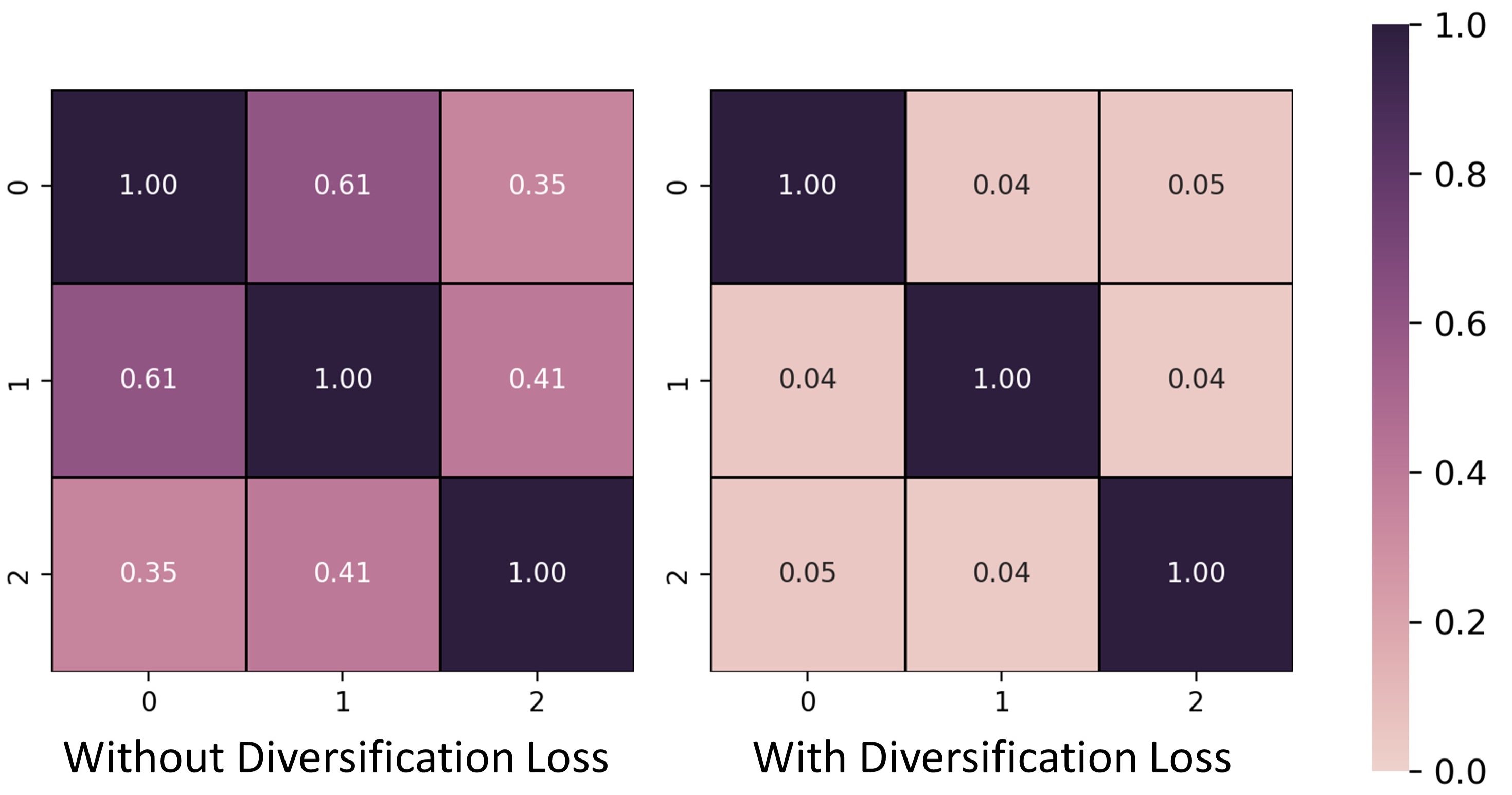}
    \caption{The similarity among Complementary Sequences learned without the proposed diversification loss and with the diversity loss. }
    \label{fig:div}
\end{figure}

\subsection{Case Study on iTransformer}
The full results of the case study on iTrnasformer are shown in Table \ref{tab:full_on_itrans}.
We also present their learned feature representation in Fig. \ref{fig:itrans_vis}, which illustrates the representation is enriched by adding the complementors.

\begin{figure}[!t]
    \centering
    \includegraphics[width=0.9\linewidth]{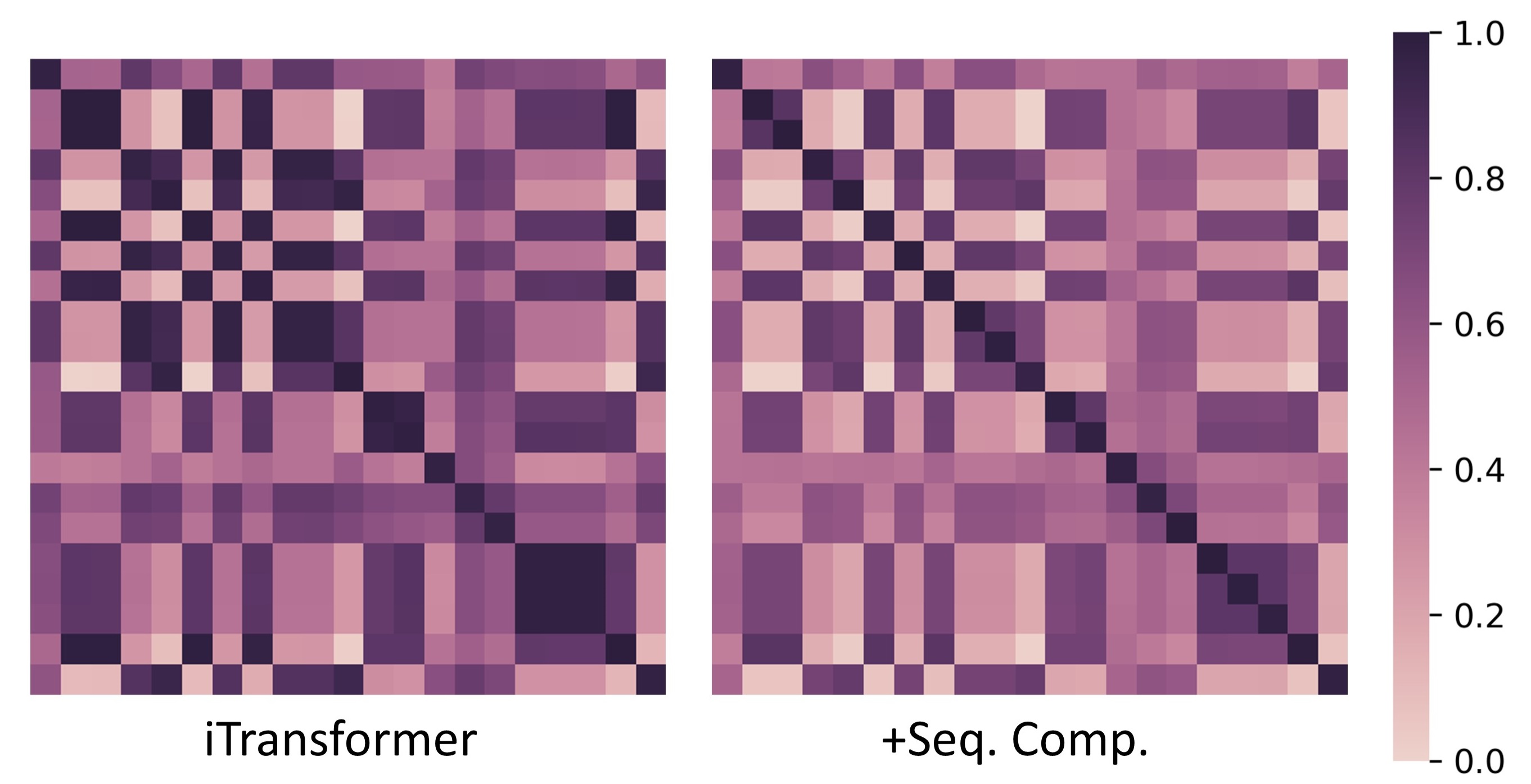}
    \caption{The visualization of learned feature representation by iTransformer. }
    \label{fig:itrans_vis}
\end{figure}

\

\begin{table*}[]
\label{tab:full_resuls}
\centering
\caption{The full results of the case study of iTransformer. 
}
\label{tab:full_on_itrans}
\resizebox{\textwidth}{!}{%
\begin{tabular}{cccccccccccccc}\toprule
 & Dataset & \multicolumn{2}{c}{ETTm1} & \multicolumn{2}{c}{ETTm2} & \multicolumn{2}{c}{ETTh1} & \multicolumn{2}{c}{ETTh2} & \multicolumn{2}{c}{Exchange} & \multicolumn{2}{c}{Weather} \\
 & Metric & MSE & MAE & MSE & MAE & MSE & MAE & MSE & MAE & MSE & MAE & MSE & MAE \\ \midrule
\multirow{5}{*}{\begin{tabular}[c]{@{}c@{}}iTransformer\\ \end{tabular}} & 96 & 0.334 & 0.368 & 0.180 & 0.264 & 0.386 & 0.405 & 0.297 & 0.349 & 0.086 & 0.206 & 0.174 & 0.214 \\
 & 192 & 0.377 & 0.391 & 0.250 & 0.309 & 0.441 & 0.436 & 0.380 & 0.400 & 0.177 & 0.299 & 0.221 & 0.254 \\
 & 336 & 0.426 & 0.420 & 0.311 & 0.348 & 0.487 & 0.458 & 0.428 & 0.432 & 0.331 & 0.417 & 0.278 & 0.296 \\
 & 720 & 0.491 & 0.459 & 0.412 & 0.407 & 0.503 & 0.491 & 0.427 & 0.445 & 0.847 & 0.691 & 0.358 & 0.347 \\
 & Avg & 0.407 & 0.410 & 0.288 & 0.332 & 0.454 & 0.447 & 0.383 & 0.407 & 0.360 & 0.403 & 0.258 & 0.278 \\ \midrule

 \multirow{5}{*}{\begin{tabular}[c]{@{}c@{}}+Seq. Comp. \end{tabular}} & 96 & \textbf{0.322} & \textbf{0.361} & \textbf{0.178} & \textbf{0.261} & \textbf{0.385} & \textbf{0.403} & \textbf{0.294} & \textbf{0.346} & \textbf{0.085} & \textbf{0.205} & \textbf{0.168} & \textbf{0.207} \\
 & 192 & \textbf{0.370} & \textbf{0.387} & \textbf{0.247} & \textbf{0.307} & \textbf{0.437} & \textbf{0.434} & \textbf{0.373} & \textbf{0.395} & \textbf{0.175} & \textbf{0.298} & \textbf{0.219} & \textbf{0.253} \\
 & 336 & \textbf{0.406} & \textbf{0.411} & \textbf{0.310} & \textbf{0.347} & \textbf{0.476} & \textbf{0.454} & \textbf{0.417} & \textbf{0.428} & \textbf{0.324} & \textbf{0.413} & \textbf{0.275} & \textbf{0.296} \\
 & 720 & \textbf{0.473} & \textbf{0.449} & \textbf{0.406} & \textbf{0.403} & \textbf{0.473} & \textbf{0.472} & \textbf{0.419} & \textbf{0.440} & \textbf{0.801} & \textbf{0.676} & \textbf{0.353} & \textbf{0.347} \\
 & Avg & \textbf{0.393} & \textbf{0.402} & \textbf{0.285} & \textbf{0.329} & \textbf{0.443} & \textbf{0.441} & \textbf{0.376} & \textbf{0.403} & \textbf{0.346} & \textbf{0.398} & \textbf{0.254} & \textbf{0.276} \\
 \bottomrule
\end{tabular}%
}
\end{table*}

\subsection{Additional Visualization}
We show the additional results with different forecasting lengths in Figs. \ref{fig:ETTh2_96_case0} to \ref{fig:ETTm2_720_case8} \underline{at the \textbf{bottom}}.

\begin{figure*}[!t]
    \centering
    \includegraphics[width=0.9\linewidth]{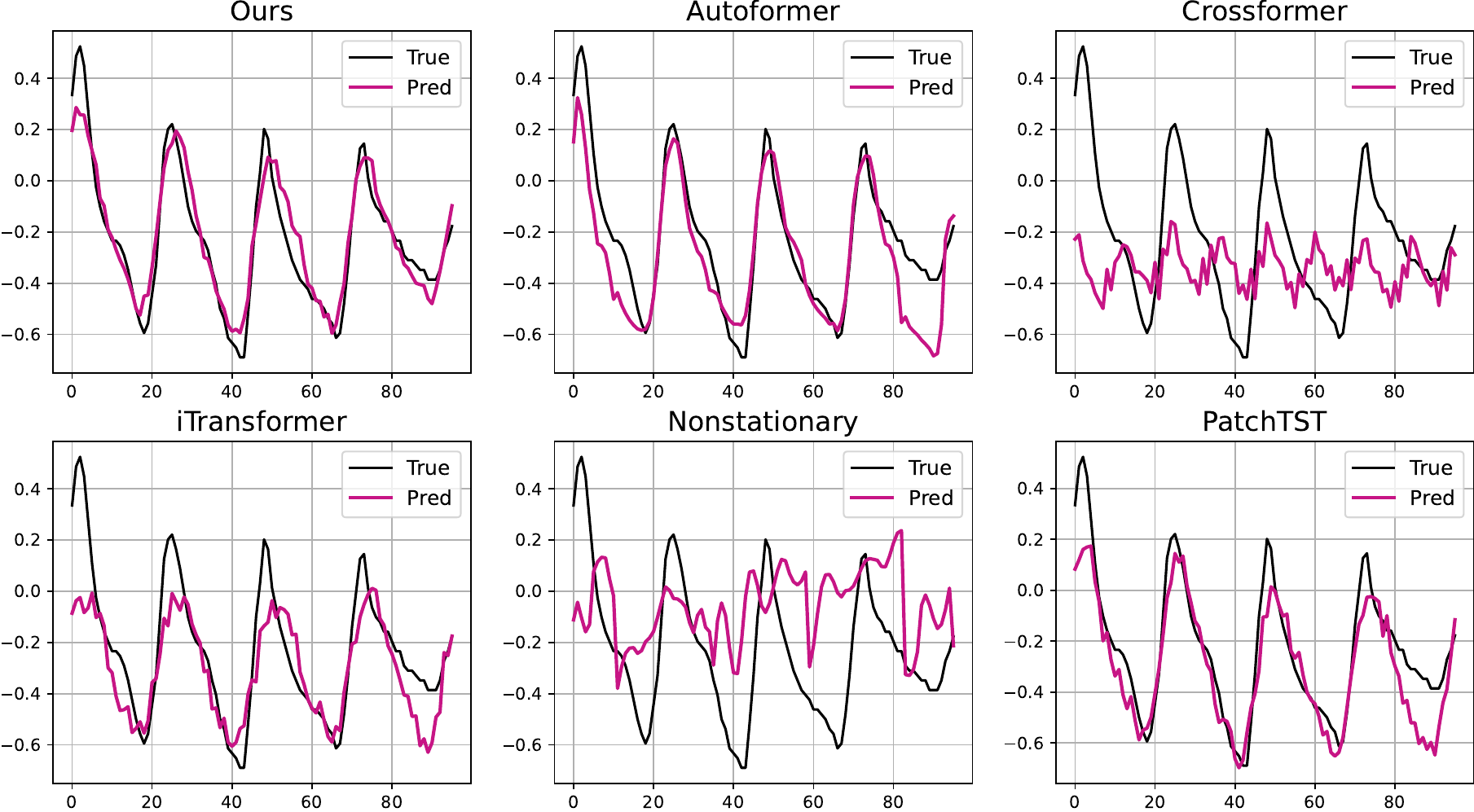}
    \caption{The visualization of forecasting results on the ETTh2 dataset with a forecasting length of 96. }
    \label{fig:ETTh2_96_case0}
\end{figure*}

\begin{figure*}[!t]
    \centering
    \includegraphics[width=0.9\linewidth]{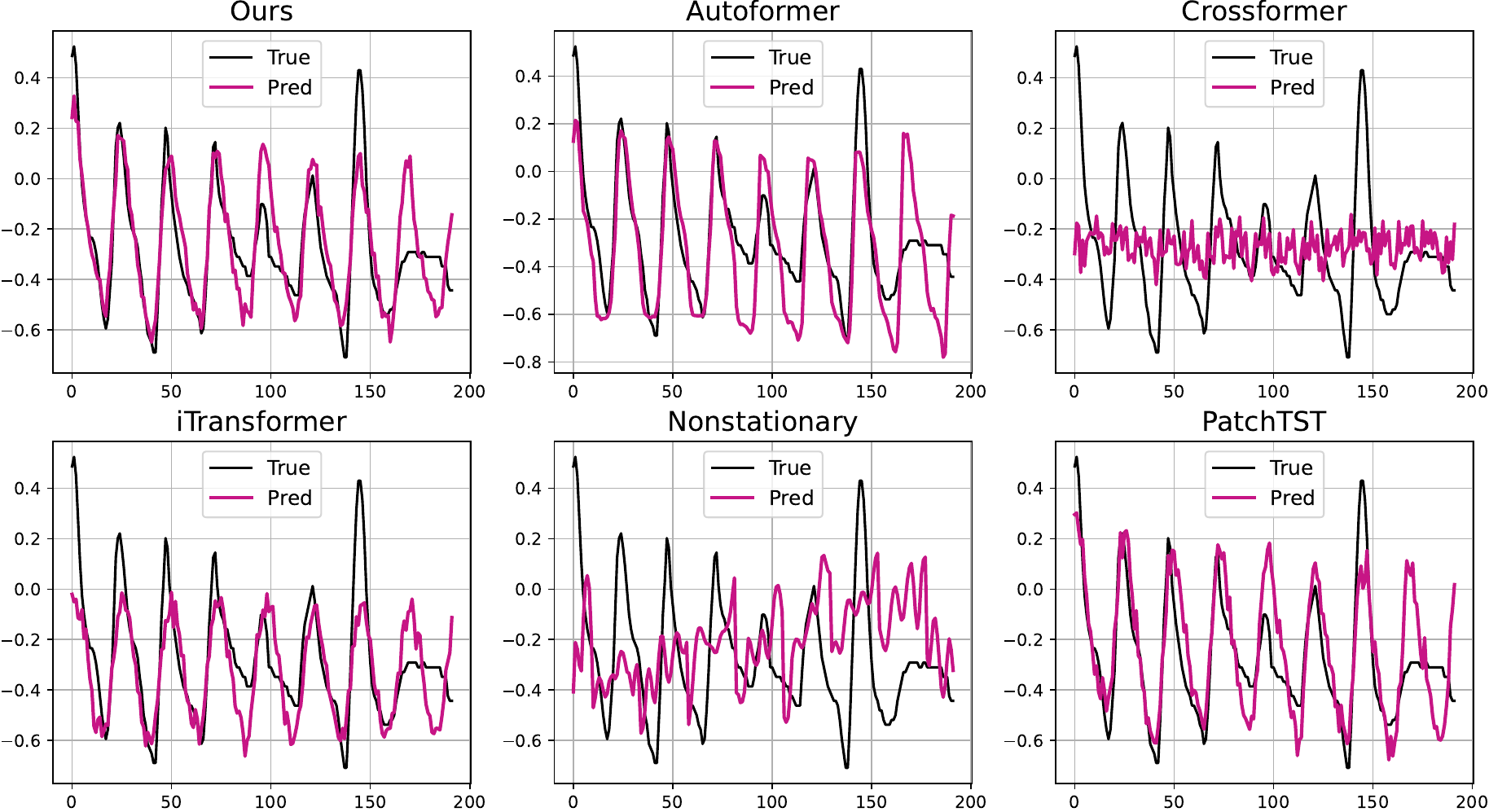}
    \caption{The visualization of forecasting results on the ETTh2 dataset with a forecasting length of 192. }
    \label{fig:ETTh2_192_case3}
\end{figure*}

\begin{figure*}[!t]
    \centering
    \includegraphics[width=0.9\linewidth]{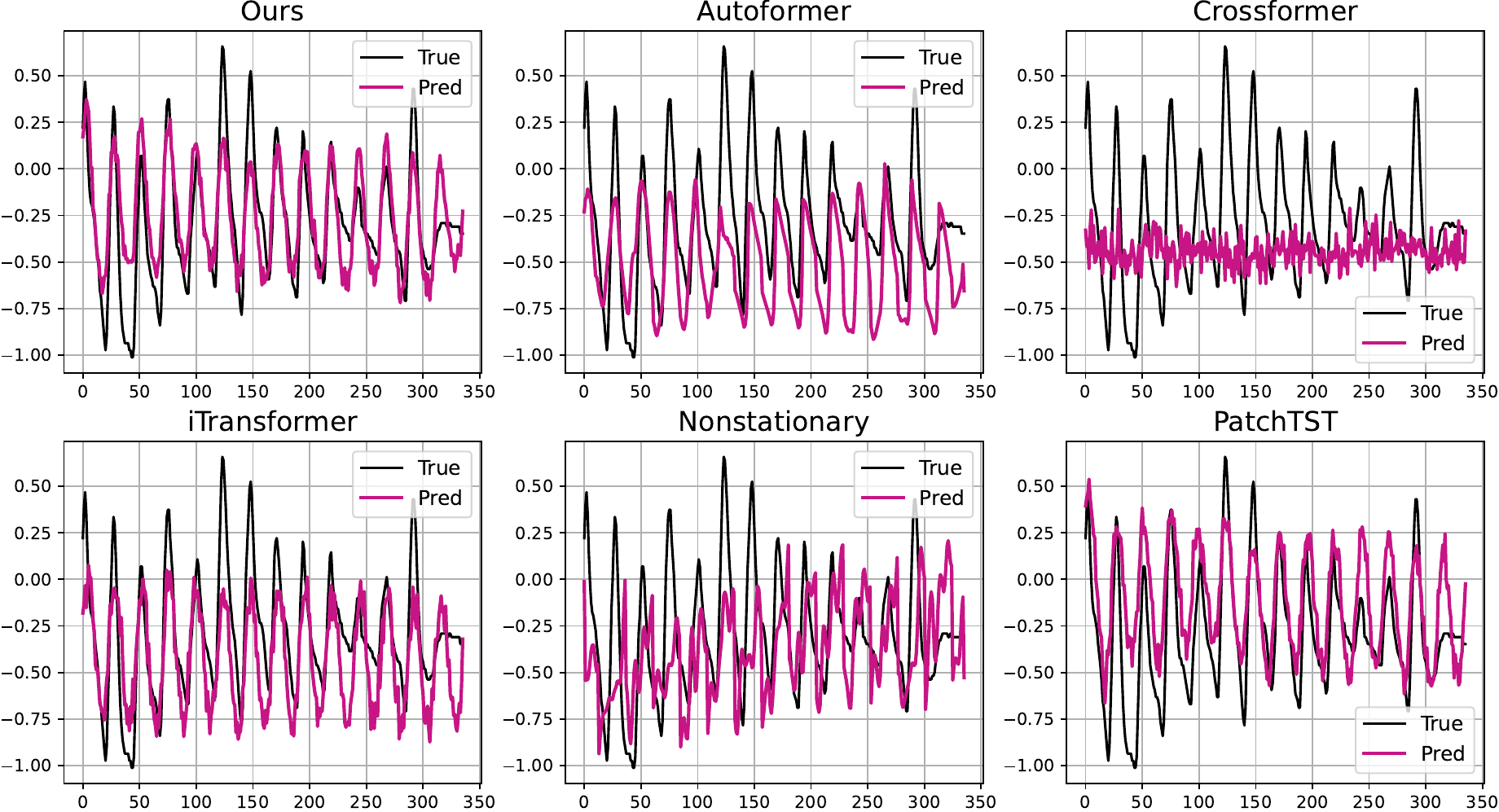}
    \caption{The visualization of forecasting results on the ETTh2 dataset with a forecasting length of 336. }
    \label{fig:ETTh2_336_case9}
\end{figure*}

\begin{figure*}[!t]
    \centering
    \includegraphics[width=0.9\linewidth]{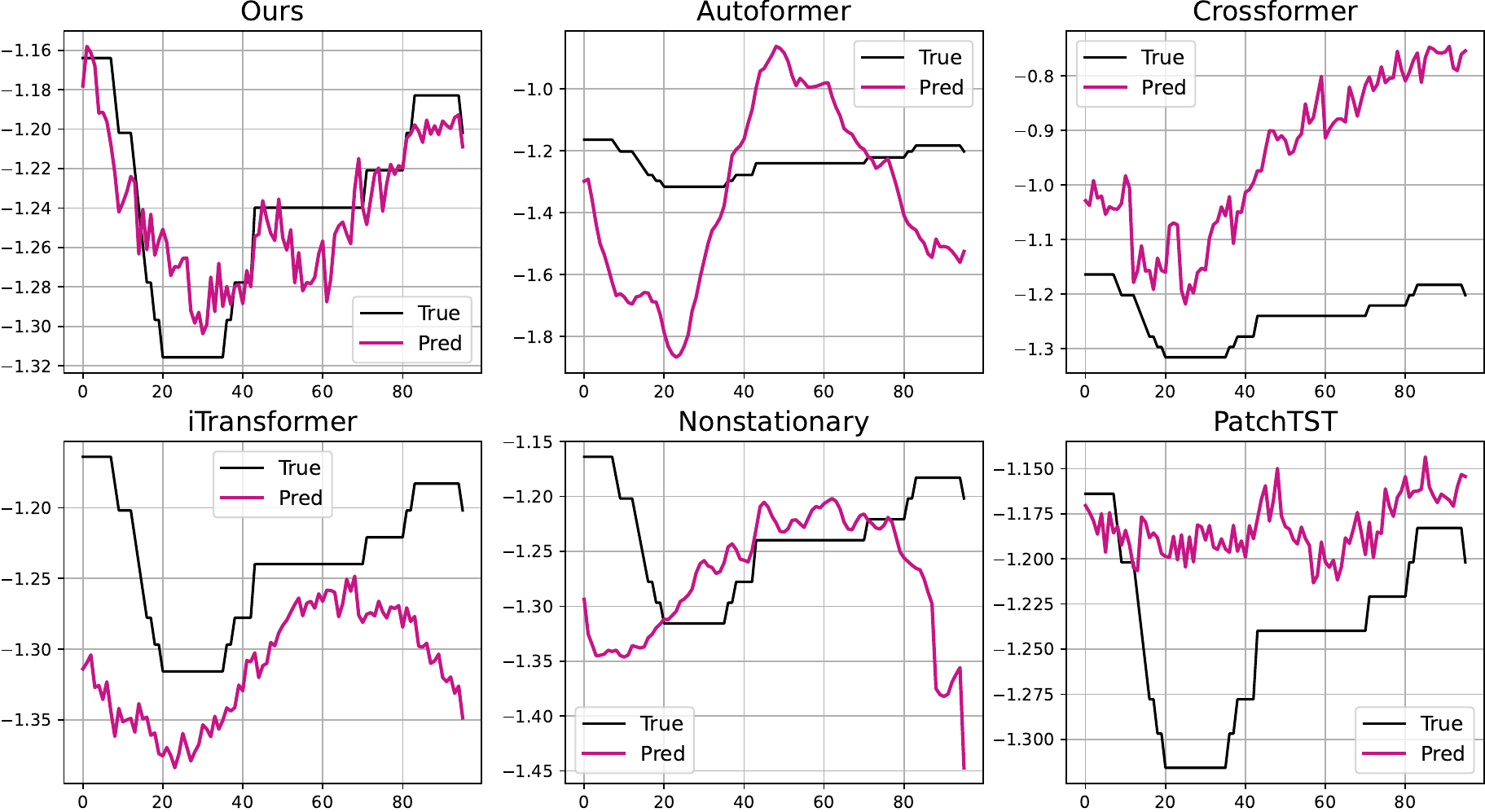}
    \caption{The visualization of forecasting results on the ETTm2 dataset with a forecasting length of 96. }
    \label{fig:ETTm2_96_case0}
\end{figure*}

\begin{figure*}[!t]
    \centering
    \includegraphics[width=0.9\linewidth]{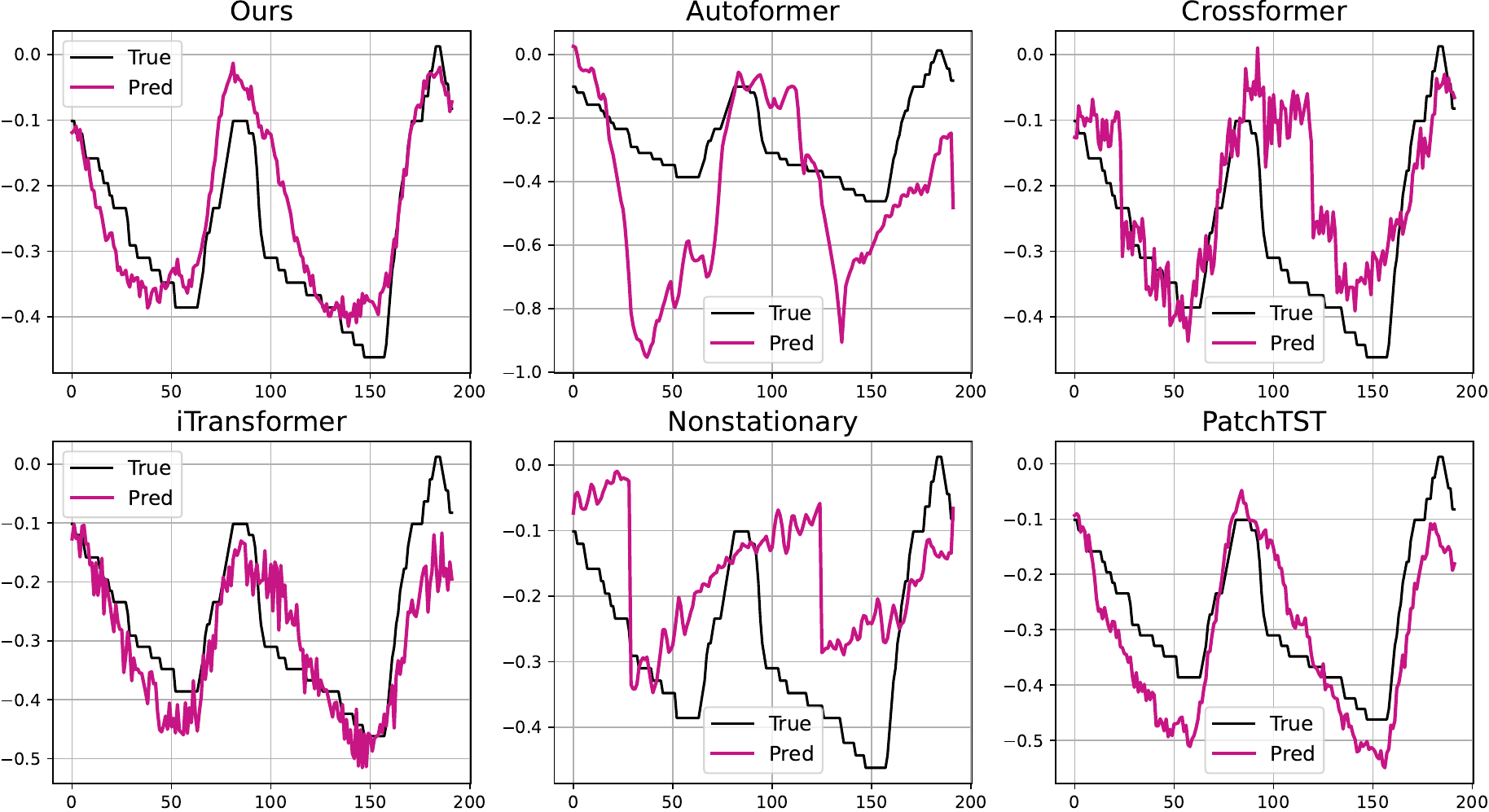}
    \caption{The visualization of forecasting results on the ETTm2 dataset with a forecasting length of 192. }
    \label{fig:ETTm2_192_case5}
\end{figure*}

\begin{figure*}[!t]
    \centering
    \includegraphics[width=0.9\linewidth]{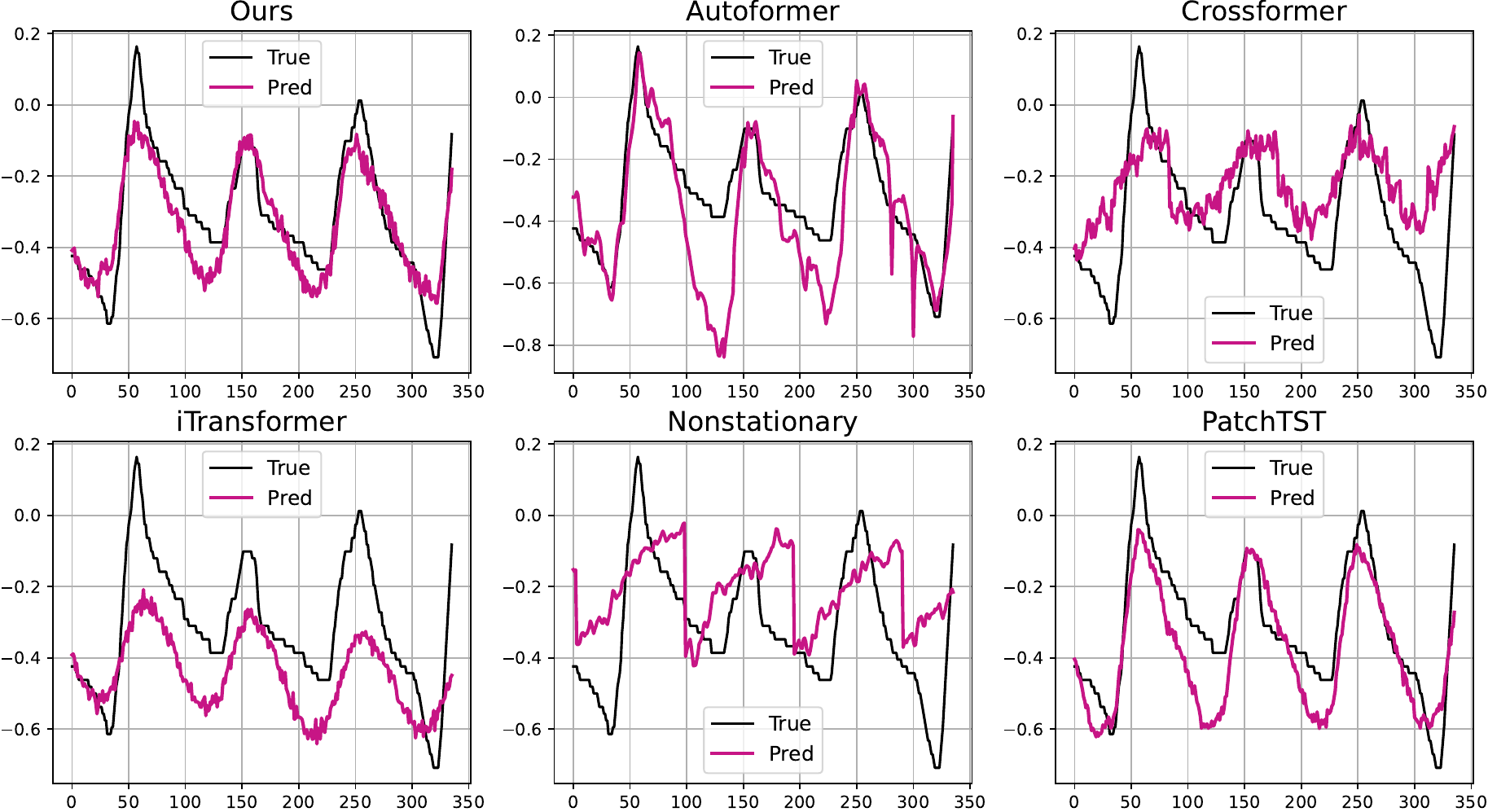}
    \caption{The visualization of forecasting results on the ETTm2 dataset with a forecasting length of 336. }
    \label{fig:ETTm2_336_case3}
\end{figure*}

\begin{figure*}[!t]
    \centering
    \includegraphics[width=0.9\linewidth]{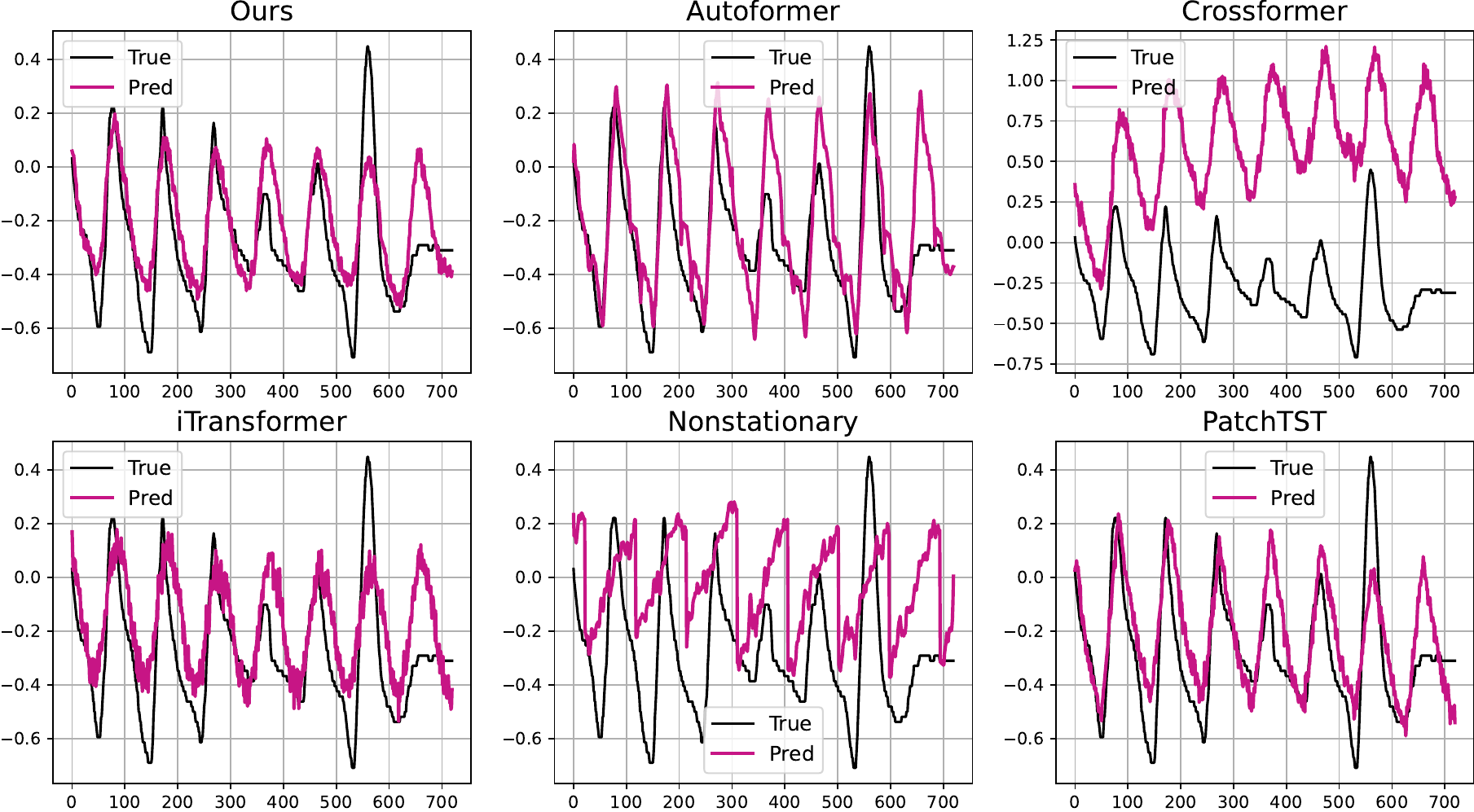}
    \caption{The visualization of forecasting results on the ETTm2 dataset with a forecasting length of 720. }
    \label{fig:ETTm2_720_case8}
\end{figure*}

\section{Discussion}

\subsection{Transformer v.s. Linear Model in TSF}
While the effectiveness of transformers for TSF has empirically been challenged by simple Linear models in recent works~\cite{li2023revisiting,zeng2023transformers}, we argue that they still have great potential for TSF due to their inherent ability to capture long-range dependencies. This can be articulated from several perspectives. 
First, transformers generally require a substantial amount of data due to the quadratic complexity of self-attention, indicating that they need more training data to fully leverage their advantages. Our empirical studies also supported this claim that transformers are struck to learn rich and generalizable feature representations with limited data. In contrast, the linear TSF models [e.g., models proposed by~\cite{li2023revisiting,zeng2023transformers}] may also demonstrate good performance under the data-scarce conditions merely because of their simplicity. However, it is overstated that Linear TSF is better than transformers. This has also been supported by the empirical studies in~\cite{zeng2023transformers}, which show that the transformer can obtain a performance improvement over the linear model even with a slight increase in dataset size. 
Second, the enhancement in the architectural design of transformers may not improve the learning ability of transformers in data-scarce conditions from its root, though we admit that the improvement of model architectures may also improve their performance and learning ability. As suggested by our studies, simply injecting more learnable sequences can help transformers learn richer and more generalizable feature representations. 


\subsection{Limitation and Future Work}
We realized that the metrics we used to estimate the diversity are promising; however, it is still challenging to apply them to compare the feature with different dimensions and different samples, especially when the number of samples is limited, and the dimension is high. Therefore, we plan to investigate this matter further to make a fairer comparison. 
We also realized that we are still adopting a channel-independent strategy as a proof-of-concept to demonstrate the effectiveness of our proposed \texttt{Complementor}. We believe the performance can be further improved by including some components, such as encoding cross-variate information, deformable patchifying strategy, and better layer normalization. As these are out of our scope in our paper, we will investigate them in the future.

\end{document}